\theoremstyle{plain}
\newtheorem{theorem}{Theorem}[section]
\newtheorem{proposition}[theorem]{Proposition}
\newtheorem{lemma}[theorem]{Lemma}
\newtheorem{corollary}[theorem]{Corollary}
\theoremstyle{definition}
\newtheorem{assumption}[theorem]{Assumption}
\theoremstyle{remark}
\newcommand{\tX}{\tilde X}
\newcommand{\iid}{\overset{\text{i.i.d.}}{\sim}}
\icmltitlerunning{Conformal C2ST: Turning weak classifiers into strong two-sample tests}
\begin{document}

\twocolumn[
  \icmltitle{Conformal C2ST: Turning weak classifiers into strong two-sample tests}



  \icmlsetsymbol{equal}{*}

  \begin{icmlauthorlist}
    \icmlauthor{Vansh Bansal}{equal,yyy}
    \icmlauthor{Tianyu Chen}{equal,yyy}
    \icmlauthor{James G. Scott}{yyy}
  \end{icmlauthorlist}

  \icmlaffiliation{yyy}{Department of Statistics and Data Sciences, University of Texas at Austin, United States}

  \icmlcorrespondingauthor{Vansh Bansal}{vansh@utexas.edu}

  \icmlkeywords{conformal prediction, neural posterior estimation, simulation based inference, two sample testing}

  \vskip 0.3in
]



\printAffiliationsAndNotice{\icmlEqualContribution}  

\begin{abstract}
The two-sample testing problem, a fundamental task in statistics and machine learning, seeks to determine whether two sets of samples, drawn from underlying distributions $p$ and $q$, are in fact identically distributed (i.e.~whether $p=q$). A popular and intuitive approach is the classifier two-sample test (C2ST), where a classifier is trained to distinguish between samples from $p$ and $q$. Yet despite simplicity of the C2ST, its reliability hinges on access to a near-Bayes-optimal classifier, a requirement that is rarely met and difficult to verify. This raises a major open question: can a weak classifier still be useful for two-sample testing? We show that the answer is a definitive yes. Building on the work of \citet{hu2024}, we analyze two conformal variants of the C2ST that convert the scores from any trained classifier---even if weak, biased, or overfit---into exact, finite-sample p-values. We establish two key theoretical properties of the conformal C2ST: (i) finite-sample Type-I error control, and (ii) non-trivial power that degrades gently in tandem with the error of the trained classifier. The upshot is that even poorly performing classifiers can yield powerful and reliable two-sample tests. This general framework finds a powerful application in Bayesian inference, particularly for validating Neural Posterior Estimation (NPE) models, where the task of comparing a learned posterior approximation $q(\theta \mid y)$ to the true posterior $p(\theta \mid y)$ can be framed as a two-sample test. Empirically, the Conformal C2ST outperforms classical discriminative tests across a wide range of benchmarks for this task. Our results establish the conformal C2ST as a practical, theoretically grounded diagnostic tool. The code is available at \url{https://github.com/TianyuCodings/conformal_c2st}.

\end{abstract}

\section{Introduction}
A fundamental problem in statistics and machine learning is to assess whether two sets of samples, drawn from distributions $p(x)$ and $q(x)$, are in fact distributed identically. To this end, two-sample tests \citep{lehmann2005testing} summarize the differences between the samples into a test statistic, which is then used to test the null hypothesis that $p=q$. A key application for modern two-sample tests is evaluating the sample quality of generative models, where $p$ represents the true data-generating process and $q$ is a trained neural approximation.
\begin{figure*}
    \centering
    \begin{subfigure}[t]{0.32\textwidth}
        \centering
        \includegraphics[width=\linewidth]{./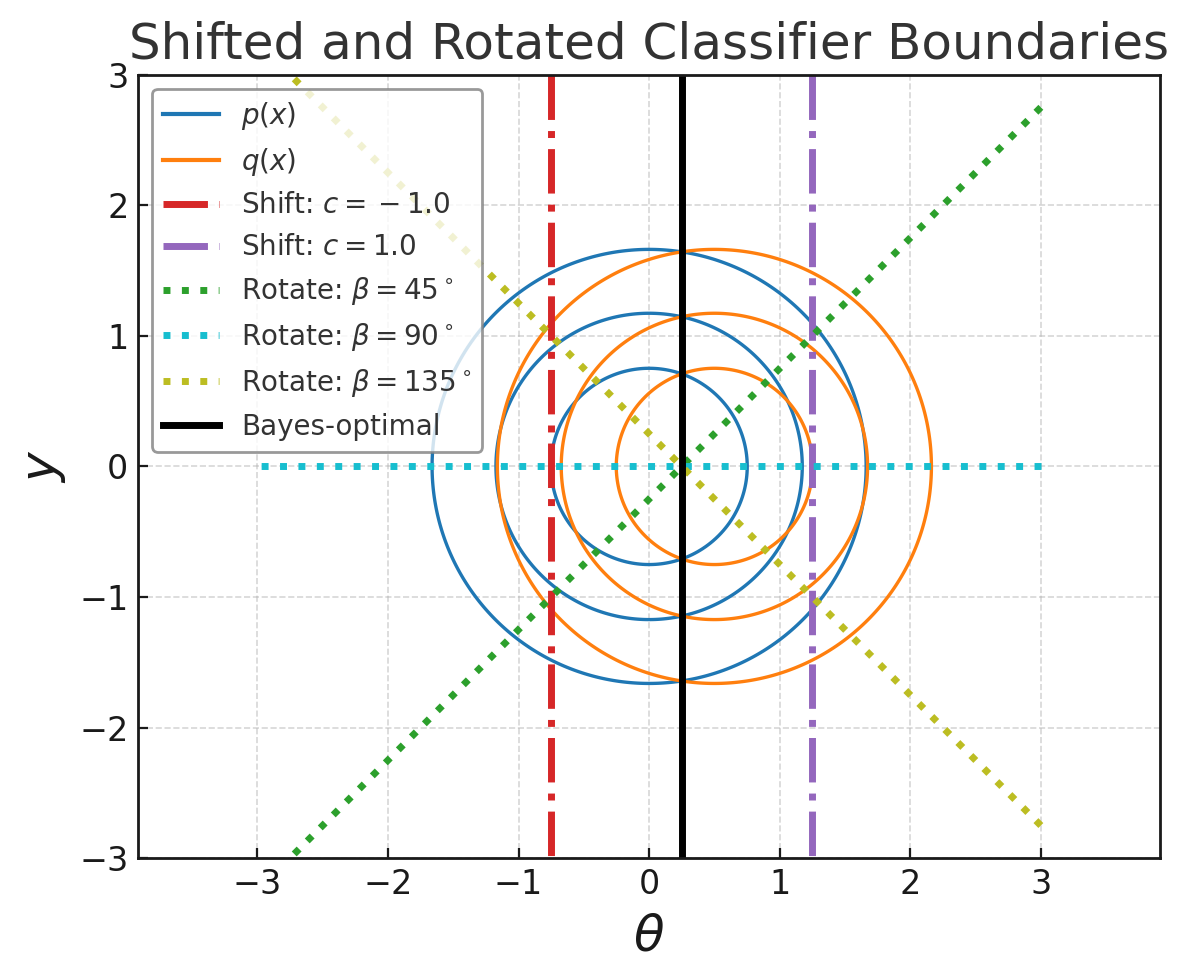}
        \caption{Perturbed classifier boundaries}
        \label{fig:rotate}
    \end{subfigure}
    \hfill
    \begin{subfigure}[t]{0.32\textwidth}
        \centering
        \includegraphics[width=\linewidth]{./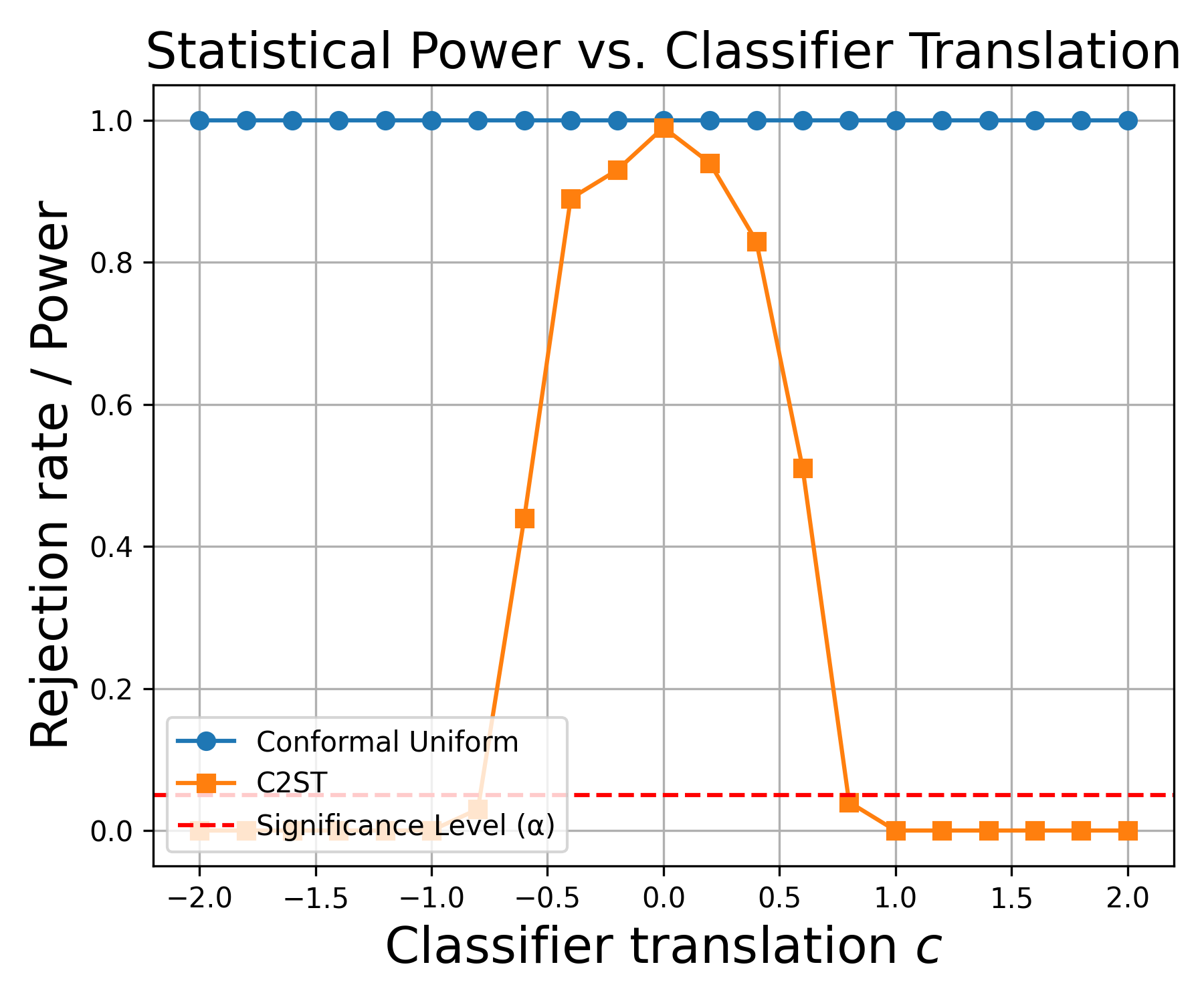}
        \caption{Comparison under shift}
        \label{fig:c2st_shift}
    \end{subfigure}
    \hfill
    \begin{subfigure}[t]{0.32\textwidth}
        \centering
        \includegraphics[width=\linewidth]{./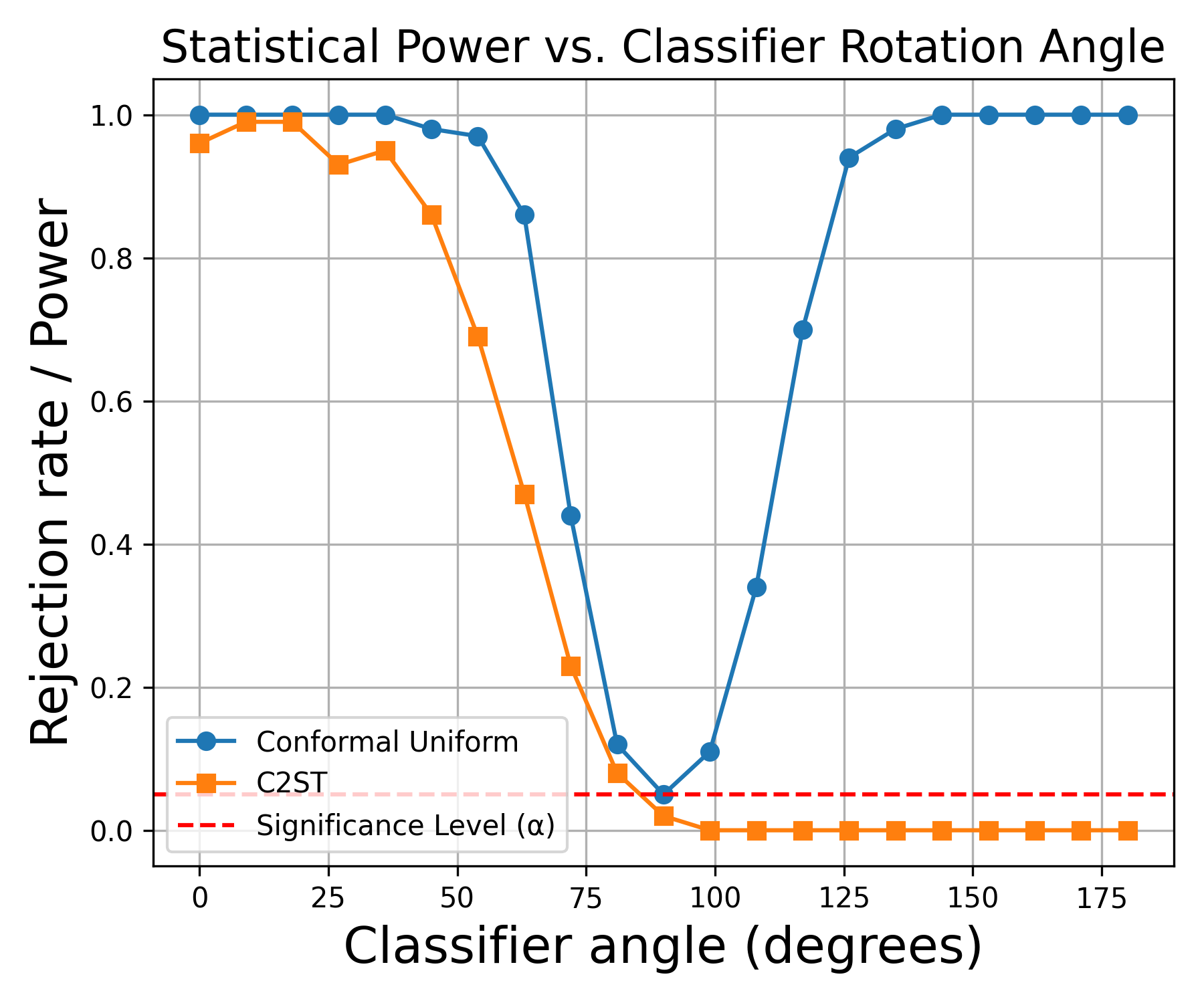}
        \caption{Comparison under rotation}
        \label{fig:c2st_rotate}
    \end{subfigure}

    \caption{Power of the C2ST and conformal C2ST under shift and rotate perturbations of the optimal decision boundary. The conformal test is much more robust to a weak or misspecified classifier.}
    \label{fig:toy_example}
\end{figure*}

A popular tool for evaluating generative models is the classifier-based two-sample test (C2ST) \citep{lopez2016revisiting}, a flexible, general-purpose approach. The C2ST reframes the problem of distributional comparison as a classification task. In this approach, a classifier is trained to distinguish between samples drawn from the true distribution $p(x)$, versus those drawn from the neural approximation $q(x)$. If the classifier achieves high accuracy, it suggests that the samples from $p$ and $q$ are distinguishable, indicating a mismatch between the two distributions. Conversely, poor classification performance implies that the two sets of samples are statistically similar, supporting the validity of the approximation. 

Yet while C2ST is valid under the null---i.e., it controls type-I error asymptotically---its power depends critically on the quality of the classifier, which has the difficult task of learning a global decision boundary over the space $\mathcal{X}$. Prior theoretical works in this direction also assume access to a Bayes-optimal or a powerful asymptotically consistent classifier \cite{kim2020classificationaccuracyproxysample, kim2019globallocaltwosampletests}. This problem persists even in more advanced  classifier-based methods like Discriminative Calibration (DC) \citep{yao2023discriminativecalibrationcheckbayesian}, $\ell$-C2ST \citep{linhart2023lc2stlocaldiagnosticsposterior} and  AutoML \citep{kübler2023automltwosampletest}, which also rely on access to a near-optimal classifier. In practice, however, a trained classifier may be weak due to limited training data, restricted model capacity, poor calibration of output probabilities (e.g. thresholding at 0.5), or simply the inherent challenge of the classification problem. This raises a potential ambiguity in interpretation: if a model $q$ ``passes'' the C2ST, that could mean either that $q = p$, or that $q \neq p$ but the classifier is too weak or poorly trained to tell the difference.

This evaluation challenge is particularly acute in Neural Posterior Estimation (NPE), an increasingly popular and practical tool for Bayesian inference. NPE methods use simulations to train a deep generative model $q(\theta \mid y)$ to approximate the true, typically intractable posterior $p(\theta \mid y)$    \citep{ho2020denoising, geffner2023compositional, chen2025conditional, gloeckler2024all,papamakarios2021normalizing,wildberger2023flow,kingma2013auto}. While powerful, this approach creates a critical validation problem: how can we verify whether the learned posterior $q$ faithfully approximates the true posterior $p$? We restrict our focus to this $\cM$-closed validation of the estimate $q$, distinguishing it from the separate $\cM$-open challenge of checking if the simulator $p$ matches the real-world data generating process. Several diagnostics exist for this purpose, yet they all suffer from  drawbacks. Simulation-based calibration (SBC) \citep{talts2018validating} is a widely used tool. Yet in its original form (rank-SBC), it operates only on one-dimensional marginals. This not only creates a potentially severe multiple-testing problem, but also leaves rank-SBC insensitive to inaccuracies that affect the joint distribution of all parameters without affecting their one-dimensional marginals. Proposed fixes that use the joint likelihood as a test statistic \citep{Modr_k_2023} are inapplicable to many NPE problems, where likelihoods are intractable. Another recent method called Test of Accuracy with Random Points (TARP) \citep{lemos2023sampling} is highly sensitive to the specification of a non-trainable proposal distribution, limiting its practical use.


\paragraph{Summary of contributions.} We address these limitations by introducing the Conformal C2ST. Our approach utilizes the conformal framework of \citet{hu2024}, who use the oracle density ratios $p/q$ to build two-sample tests for conditional distributions with unequal $y-$marginals.  We, on the other hand, concentrate on the equal marginal conditional or unconditional regime for evaluating accessible generative models-- if present, the true conditioning input can be given to the model unlike the case when only samples are available from both. Moreover, since density ratios are only a monotone transform of the \textit{Bayes-optimal} classifier, our tests appear as a special case of theirs, however our motivations are orthogonal. \citet{hu2024} rely on access to a strong classifier to model the density ratios which often fails in practice. This is further problematic in settings like NPE, where classifiers are often trained on limited data and may be too weak to find an optimal decision boundary.

We instead target the realistic, imperfect regime. By providing a rigorous robustness theory for weak plug-in classifiers, we show how conformal calibration can transform a weak, misspecified, or overfit classifier into a powerful and trustworthy two-sample test. Specifically, we first show that conformal calibration effectively decouples type-I error control from the classifier's discriminative accuracy. More importantly, we also prove that our conformal tests maintain meaningful power when $p \neq q$, even when the classifier is weak or poorly trained. Intuitively, they do so by aggregating weak but informative ranking signals, a property that is crucial in scenarios where the traditional C2ST struggles, such as high-dimensional densities, small-sample regimes, and low signal-to-noise ratio tasks. 

Crucially, we provide two conformal tests and show that rank-based post-processing offers a flexible computational trade-off: The uniform test maximizes power through resampling fresh calibration sets, while the multiple test \textit{strictly matches} the simulation budget of the standard C2ST while still being  more powerful and robust to weak classifiers. We support these theoretical developments with extensive empirical results, showing that both variants of the Conformal C2ST exhibit state of the art performance across a wide range of two-sample testing problems.

\paragraph{A toy example.}
Before detailing our results, we briefly focus on a toy example, to illustrate the fundamental advantage of conformal calibration. The setup mimics the NPE setting, where the marginal distribution of $y$ is preserved under both the true and approximate joint distributions. Specifically, we consider two bivariate normal distributions: \( p(\theta, y) \) is a standard bivariate normal, \( \mathcal{N}(\mathbf{0}, I_2) \), while \( q(\theta, y) \) is the same distribution with a mean shift of $0.5$ in the \( \theta \) coordinate, i.e., \( \mathcal{N}([0.5, 0]^\top, I_2) \). The Bayes-optimal classifier for distinguishing between samples from \( p \) and \( q \) is a linear decision boundary at \( \theta = 0.25 \), bisecting $\mathbb{R}^2$ between the two means. Classifier scores are obtained by computing the signed distance from a sample point $(\theta, y)$ to the decision boundary.

We then degrade the classifier by manipulating the decision boundary in two systematic ways. First is \textit{translation}, where the decision boundary is shifted horizontally by a parameter \( c \), modifying the decision boundary to \( \theta = 0.25 + c \). The parameter \( c \) can be positive, shifting the boundary toward \( q \), or negative, shifting it toward \( p \). As \( |c| \) increases, the boundary moves further from its optimal position, increasing classification error. As we see in Figure \ref{fig:c2st_shift}, this rapidly degrades the power of the C2ST for distinguishing $p$ and $q$. Yet the conformal C2ST's performance remains remarkably stable, showing no degradation in power even for large shifts. This suggests a strong level of robustness to biased or poorly calibrated classifiers.  (Although the classifier is trained to distinguish individual draws from \( p \) and \( q \), the goal of neural posterior testing is to assess whether two \emph{collections} of samples arise from the same distribution, allowing aggregation across multiple draws and yielding greater power than would be expected from single-point classification alone.) The second form of degradation is \textit{rotation}, where the decision boundary is rotated about the midpoint between the means of \( p \) and \( q \). The boundary under rotation is described by 
\((\theta - 0.25) \cos\beta + y \sin\beta = 0\), where \( \beta \) controls the angle of rotation. A rotation of \( \beta = 0 \) corresponds to the optimal linear discriminant, while increasing \( |\beta| \) introduces growing misalignment. Figure \ref{fig:c2st_rotate} shows that the conformal C2ST continues to perform significantly better under quite severe rotations, indicating resilience even when the classifier is badly misaligned. At \( \beta = \pi/2\), the decision boundary becomes completely orthogonal to the separation axis of the two distributions, rendering the distribution of classifier scores identical under \( p \) and \( q \). In this extreme case, the conformal C2ST has power $0.05$, which is precisely the type-I error rate of the test.

These results illustrate the key advantage of the conformal C2ST: even when the classifier itself becomes progressively poorer or more miscalibrated, its ranking signals nonetheless remain useful.

\section{Preliminaries} \label{sec:prelims}

\paragraph{Notation. }
While the conformal C2ST is general two-sample test, we focus on the NPE setting as a motivating problem. Consider two distributions over joint parameter–observation pairs \( (\theta, y) \in \Theta \times \mathcal{Y} \): the true posterior \( p(\theta \mid y) \), defined via the joint density \( p(\theta, y) = \pi(\theta) p(y \mid \theta) \), and an approximate posterior \( q(\theta \mid y) \), learned using simulation-based inference (SBI). We are agnostic as to \textit{how} $q$ was learned; instead, our goal is to assess whether \( q(\theta \mid y) = p(\theta \mid y) \) almost surely over \( \Theta \times \mathcal{Y} \). Assuming a shared marginal \( \pi(y) \), we define the approximate joint as \( q(\theta, y) := \pi(y) q(\theta \mid y) \). Under this assumption, testing equality of posteriors reduces to testing the null hypothesis $H_0: p(\theta, y) = q(\theta, y)$ almost surely over $\Theta \times \mathcal{Y}$. This naturally frames the problem as a \textit{two-sample test} between the joint distributions \( p(\theta, y) \) and \( q(\theta, y) \).

In the NPE setting, i.i.d. samples from both $p$ and $q$ are readily available. To sample from \( p \), we draw \( \theta \sim \pi(\theta) \) and then \( y \sim p(y \mid \theta) \). To sample from \( q \), we use the $y$ margin only of the true $p$, implicitly generating \( y \sim \pi(y) \), and then we sample \( \theta \sim q(\theta \mid y) \) from the learned model.  For convenience, we write \( x = (\theta, y) \), and use \( p \) and \( q \) as shorthand for the true and approximate joint densities over \( x \). We denote samples as \( \{X_i\}_{i=1}^n \sim p \) and \( \{\tilde{X}_i\}_{i=1}^n \sim q \). Throughout, we assume that \( p(\theta \mid y) \) and \( q(\theta \mid y) \) are absolutely continuous with respect to a common base measure.

\paragraph{The classifier two-sample test (C2ST).} The C2ST is a widely used approach for detecting differences between two distributions. In the context of posterior validation, suppose we are given two datasets \( \{x_i\}_{i=1}^n \sim p \) and \( \{\tilde{x}_i\}_{i=1}^n \sim q\). To test the null hypothesis \( H_0 : p = q \), a classifier is trained to distinguish between the two samples. Specifically, each \(x_i\) is assigned label 1 and each \(\tilde{x}_i\) is assigned label 0, producing a labeled dataset \(\{(z_i, \ell_i)\}_{i=1}^{2n}\), where \(z_i \in \mathcal{X}\) and \(\ell_i \in \{0,1\}\). The combined dataset is randomly split into disjoint training and testing subsets. A classifier \(f: \mathcal{X} \to [0,1]\) is trained on the training portion to estimate the conditional probability \(\P{\ell=1 \mid z}\), and evaluated on the test set \(\mathcal{D}_{\text{te}}\). The C2ST test statistic is the classification accuracy on the test set:
\[
\hat{t} := \frac{1}{n_{\text{te}}} \sum_{(z_i, \ell_i) \in \mathcal{D}_{\text{te}}} \bI \left\{ \bI \{ f(z_i) > 1/2 \} = \ell_i \right\} \, ,
\]
which is shown to have an asymptotic normal distribution by \citet{lopez2016revisiting}. The null hypothesis \(H_0 : p = q\) is rejected if \(\hat{t}\) is significantly greater than 0.5, indicating that the classifier has learned to distinguish between \(p\) and \(q\).

While C2ST is easy to implement and effective in many cases, it has two key limitations. First, its power depends heavily on the classifier quality. Poorly trained or underfit classifiers can yield inconclusive results, even when the two distributions differ. Second, its test statistic relies on hard decisions via thresholding at 0.5, discarding potentially useful information about classifier confidence.

\section{The uniform test: conformal calibration of classifier scores}
\label{sec:uniform_test}

To address these limitations, we adopt a conformal framework that calibrates classifier scores directly, rather than thresholding them. This yields a method we term the \emph{Conformal C2ST}. The core idea is to treat each point as a test case, compute a score for that point (e.g., classifier log-odds), and use a conformal p-value to assess how extreme that score is relative to a calibration sample from the reference distribution ($p$). Each such p-value reflects the plausibility of a single test point from $q$ under the null. To test whether \( p = q \) globally, we repeat this procedure across many independent draws from \( q \), aggregating the resulting p-values to assess overall deviation from uniformity. This allows the method to extract and accumulate weak signals for assessing distributional equality, even from underperforming classifiers, by calibrating based on ranks rather than raw accuracy.

Concretely, let \( \{X_1, \dots, X_m\} \sim p \) be a calibration sample and let \( \tilde{X} \sim q \) be a test point. Let \( s : \mathcal{X} \to \mathbb{R} \) be a deterministic scoring function, which in our case will be the output of a classifier trained to distinguish \( p \) from \( q \), just like in C2ST. (Below we discuss the specific choice of $s$.) Define the nonconformity scores \( S_i = s(X_i) \) for \( i = 1, \dots, m \), and \( S_{m+1}= \tilde S = s(\tilde{X}) \). The conformal p-value \citep{vovk2005algorithmic, lei2018distribution} for \( \tilde{X} \) is then given by:
\begin{align} \label{eq:conformal-pval}
U := \frac{\sum_{i=1}^{m+1} \bI \left\{ S_i < \tilde S\right\} + \xi \cdot \sum_{i=1}^{m+1} \bI \left\{ S_i = \tilde S \right\}}{m + 1}
\end{align}
where \( \xi \sim \mathrm{Unif}[0,1] \) is a tie-breaking random variable. The p-value \( U \) reflects the relative rank of the test point's score among the calibration scores.

\subsection{Results on validity and power}
\label{sec:uniform-power-alternative}

A key advantage of this approach is that it inherits the finite-sample validity guarantee from conformal prediction. Under \( H_0 \), the calibration points and the test point are exchangeable, implying the following marginal guarantee.

\begin{lemma}[Uniformity under the null]
\label{lem:uniformity}
Under the null hypothesis \(H_0 : p = q\), the conformal p-value \(U\) defined in \eqref{eq:conformal-pval} satisfies:
\[
\mathbb{P}(U \leq u \mid H_0) = u, \quad \forall u \in [0, 1],
\]
where the probability is over the random draw of the calibration sample, the test point, and the tie-breaking variable.
\end{lemma}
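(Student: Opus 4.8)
The plan is to establish the classical fact that a randomized conformal $p$-value built from exchangeable nonconformity scores is exactly $\mathrm{Unif}[0,1]$, following the standard argument behind \cite{vovk2005algorithmic, lei2018distribution}; the only real bookkeeping is the treatment of ties through the variable $\xi$. First I would record the exchangeability input: under $H_0$ the points $X_1,\dots,X_m$ and $\tilde X$ are i.i.d.\ draws from the common law $p = q$, and since $s$ is a fixed deterministic map, the scores $S_1,\dots,S_m,S_{m+1}$ are themselves i.i.d.\ (hence exchangeable), with $\xi \sim \mathrm{Unif}[0,1]$ independent of all of them.

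Next I would condition on the unordered collection $\mathcal{S} = \{S_1,\dots,S_{m+1}\}$ of scores. Write its distinct values as $v_1 < \dots < v_k$ with multiplicities $n_1,\dots,n_k$ (so $\sum_j n_j = m+1$), put $N_{<j} = n_1 + \dots + n_{j-1}$, and set $a_j = N_{<j}/(m+1)$ and $\ell_j = n_j/(m+1)$. By exchangeability, conditionally on $\mathcal{S}$ the test score $S_{m+1}$ is equally likely to be any of the $m+1$ entries, so $\mathbb{P}(S_{m+1} = v_j \mid \mathcal{S}) = \ell_j$. On the event $\{S_{m+1} = v_j\}$ the two sums in \eqref{eq:conformal-pval} are determined by $\mathcal{S}$: the number of strictly smaller scores is $N_{<j}$ and the number of scores equal to $S_{m+1}$ is $n_j$, so $U = (N_{<j} + \xi n_j)/(m+1) = a_j + \xi \ell_j$. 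Since $\xi$ is uniform and independent of $(\mathcal{S}, S_{m+1})$, conditionally on $\{S_{m+1}=v_j,\ \mathcal{S}\}$ the $p$-value $U$ is uniform on the interval $I_j := (a_j, a_j + \ell_j)$ of length $\ell_j$.

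It remains to mix over the value-blocks. The intervals $I_1,\dots,I_k$ are consecutive, $I_1$ starts at $0$ and $I_k$ ends at $1$, so they tile $(0,1)$; block $j$ is selected with probability exactly $\ell_j = |I_j|$. Hence, for every $u \in [0,1]$,
\[
\mathbb{P}(U \le u \mid \mathcal{S}) \;=\; \sum_{j=1}^{k} \ell_j \cdot \frac{\min\{(u - a_j)_+,\ \ell_j\}}{\ell_j} \;=\; \sum_{j=1}^{k} \min\{(u - a_j)_+,\ \ell_j\} \;=\; u,
\]
where the last equality is a telescoping check: blocks entirely to the left of $u$ each contribute their full length $\ell_j$, the block containing $u$ contributes the overshoot $u - a_{j^\ast}$, and blocks to the right contribute $0$, summing to $a_{j^\ast} + (u - a_{j^\ast}) = u$ (the endpoints $u = 0$ and $u = 1$ are immediate). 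Taking expectations over $\mathcal{S}$ removes the conditioning and yields $\mathbb{P}(U \le u \mid H_0) = u$.

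The only delicate point — the ``hard part,'' though it is entirely standard — is the tie handling: one must verify (i) that conditionally on $\mathcal{S}$ the test point lands in value-block $j$ with probability precisely $n_j/(m+1)$, which is exactly where exchangeability under $H_0$ is invoked, and (ii) that the randomized interpolation $\xi n_j$ inside a block of multiplicity $n_j$ spreads $U$ uniformly over a sub-interval of exactly the right width, so that the $k$ blocks reassemble into a single $\mathrm{Unif}[0,1]$ law. If the score distribution were atomless there are no ties almost surely, $\xi$ plays no role, and the argument collapses to the textbook statement that $\mathrm{rank}(S_{m+1})$ is uniform on $\{1,\dots,m+1\}$.
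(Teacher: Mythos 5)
Your proof is correct. The paper does not actually prove Lemma~\ref{lem:uniformity}; it simply cites \cite{vovk2005algorithmic, lei2018distribution} as a standard property of conformal inference, and your argument is precisely the standard one those references rely on: condition on the multiset of scores, use exchangeability to place $S_{m+1}$ in value-block $j$ with probability $n_j/(m+1)$, and let the independent $\xi$ smear $U$ uniformly over the corresponding sub-interval so the blocks tile $[0,1]$. The tie-handling bookkeeping, which is the only nontrivial step given that the sums in \eqref{eq:conformal-pval} run to $m+1$ and hence always include the test point itself in the equality count, is handled correctly.
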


This result holds for any deterministic scoring function \( s \) and for any finite calibration size \( m \), making the conformal C2ST robust to classifier quality and sample size. This result is a standard property of conformal inference \citep{vovk2005algorithmic, lei2018distribution}.

\paragraph{From marginal p-values to a uniformity test.}

To turn the marginal validity established in Lemma~\ref{lem:uniformity} into a two-sample test of \( H_0 : p = q \), we repeat the conformal p-value computation across multiple test points. Specifically, let \( \{ \tilde{X}_j \}_{j=1}^{n_q} \sim q \) be independent test samples. For each \( j \), we draw an independent calibration set \( \mathcal{C}_j = \{ X_{j,i} \}_{i=1}^m \sim p \). We compute the scores $S_{j,i} = s(X_{j,i})$, and $\tilde S_j = s(\tX_j)$ and the corresponding conformal p-value as
\[
U_j := \frac{\sum_{i=1}^{m+1} \bI \left\{ S_{j,i} < \tilde S_j \right\} + \xi_j \cdot \sum_{i=1}^{m+1} \bI \left\{ S_{j,i} = \tilde S_j \right\}}{m + 1} ,
\]
where \( S_{j,m+1} = \tilde{S}_j \), and \( \xi_j \sim \text{Uniform}[0,1] \) are independent tie-breaking variables. Under \( H_0 \), each \( U_j \sim \text{Unif}[0,1] \), so we can aggregate the \( \{ U_j \} \) to form a global test statistic. We consider the empirical CDF \( \hat G(u) = \frac{1}{n_q} \sum_{j=1}^{n_q} \bI\{ U_j \le u \} \) and apply the one-sample Kolmogorov–Smirnov (KS) test\footnote{We utilize the KS test for its interpretability and established use in the literature, though our framework is compatible with any valid uniformity test (e.g., Anderson-Darling, Cramer-von Mises).} \citep{lehmann2005testing}, using
\[
T_{\text{KS}} = \sup_{u \in [0,1]} \left| \hat{G}(u) - u \right|.
\]
Because \( H_0 : p = q \) implies exchangeability between the calibration and test samples, each \( U_j \) is uniformly distributed, and the KS test controls Type-I error at level \( \alpha \). (We refer to this in the benchmarks as the "uniform" test.)

\paragraph{Power.} While Lemma~\ref{lem:uniformity} ensures exact marginal validity of conformal p-values for \textit{any} deterministic score function \(s\), the power of the test under the alternative \(H_1 : p \neq q\) crucially depends on how well \(s\) separates samples from \(p\) and \(q\). A natural and theoretically grounded choice is the \emph{oracle density ratio} between the joint distributions (or equivalently their conditionals for $\theta$, since \(p(y) = q(y)\)). Let \( r(x) = p(x)/q(x) \) denote this true density ratio. If \( \eta(x) := \mathbb{P}(l=1 \mid x) \) is the output of the Bayes classifier distinguishing between \(x \sim p\) (label \(l=1\)) and \(x \sim q\) (label \(l=0\)), then the density ratio is related to classifier scores via:
\begin{align}
    r(x) = \frac{\eta(x)}{1 - \eta(x)}.
    \label{eq:classifier-density-ratio}
\end{align}
We will soon consider what happens when $r$ is estimated with error, but for now we suppose it is known. Because the transformation \( t \mapsto t/(1 - t) \) is strictly increasing on \((0,1)\), the rankings induced by \( r(x) \) and the probabilities \( \eta(x) \) are identical.

These rankings are of central importance to the conformal method, which depends only on the orderings of scores, not their magnitudes. A natural way to quantify the quality of such a ranking is the area under the ROC curve (AUC) \citep{fawcett2006roc}, which measures how well a scoring function separates the two distributions. Specifically, the AUC for \( r(x) \) is given by:
\[
\mathrm{AUC}(r) = {\bP}\left[ r(X) > r(\tilde{X}) \right] + \frac{1}{2}\mathbb{P}\left[ r(X) = r(\tilde{X}) \right],
\]
which reflects the probability that a randomly chosen sample \( X \sim p \) is ranked above \(\tX \sim q \) by the scoring function \( r(x) \). The next lemma formalizes that \( r(x) \) (or any monotonic transformation of it) maximizes this quantity.

\begin{lemma} \label{lem:optimal-score-auc}
For any measurable scoring function \( s: \mathcal{X} \to \mathbb{R} \), \( \mathrm{AUC}(s) \leq \mathrm{AUC}(r) \), with equality if and only if there exists a strictly increasing function \( h \) such that \( s(x) = h(r(x)) \) for \(q\)-almost every \( x \).
\end{lemma}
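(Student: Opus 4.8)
The plan is to reduce the statement to a pointwise comparison after a symmetrization step, in the spirit of the classical fact that the likelihood ratio is the optimal ROC score (Neyman--Pearson). Write $\mu$ for the common base measure, and for a score $s$ set $w_s(a,b) := \bI\{s(a) > s(b)\} + \tfrac12 \bI\{s(a) = s(b)\}$, so that $w_s(a,b) + w_s(b,a) = 1$ for all $(a,b)$ and $\mathrm{AUC}(s) = \iint w_s(a,b)\, p(a)\,q(b)\, d\mu(a)\,d\mu(b)$. Averaging this expression with the copy obtained by relabeling the dummy variables $a \leftrightarrow b$, using $w_s(b,a) = 1 - w_s(a,b)$ together with $\iint p(b)\,q(a)\,d\mu(a)\,d\mu(b) = 1$, I obtain the identity
\[
\mathrm{AUC}(s) \;=\; \frac12 \;+\; \frac12 \iint w_s(a,b)\,\bigl(p(a)q(b) - p(b)q(a)\bigr)\, d\mu(a)\,d\mu(b),
\]
and likewise with $r$ in place of $s$. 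This isolates all the $s$-dependence into a single integral against the fixed antisymmetric kernel $\Delta(a,b) := p(a)q(b) - p(b)q(a)$.

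Next I would establish the inequality by a pointwise bound on the integrand. On $\{q>0\}\times\{q>0\}$ we have $\Delta(a,b) = q(a)q(b)\,(r(a) - r(b))$, so $\operatorname{sign}\Delta(a,b) = \operatorname{sign}(r(a)-r(b))$; on the remaining region one checks directly (with the convention $r=+\infty$ on $\{p>0=q\}$) that the same conclusion drives the bound. Since $w_s$ takes values in $[0,1]$ while $w_r(a,b)$ equals $1$ when $r(a)>r(b)$ and $0$ when $r(a)<r(b)$, it follows that $w_r(a,b)\,\Delta(a,b) \ge w_s(a,b)\,\Delta(a,b)$ for every $(a,b)$: when $\Delta>0$ use $w_r = 1 \ge w_s$; when $\Delta<0$ use $w_r = 0$ and $w_s\,\Delta \le 0$; and when $\Delta = 0$ both sides vanish. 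Integrating yields $\mathrm{AUC}(r) \ge \mathrm{AUC}(s)$, which is the first claim.

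For the equality characterization, the ``if'' direction is immediate: if $s = h\circ r$ holds $q$-a.e.\ for a strictly increasing $h$, then $s$ and $r$ order every pair of points the same way outside a $q$-null set, so $w_s = w_r$ in the integral and the two AUCs coincide. For ``only if'', equality in the integrated pointwise inequality forces $w_r(a,b)\Delta(a,b) = w_s(a,b)\Delta(a,b)$ for $\mu\times\mu$-a.e.\ $(a,b)$; restricted to $\{q>0\}^2$, where $\Delta$ vanishes exactly on $\{r(a)=r(b)\}$, this says precisely that $s(a) > s(b) \iff r(a) > r(b)$ for $q\times q$-a.e.\ pair with $r(a)\ne r(b)$. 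A Fubini argument then extracts a $q$-conull set $A$ on which the strict orders induced by $s$ and by $r$ agree, and a standard order-embedding construction on $r(A)\subseteq\mathbb{R}$ produces a strictly increasing $h$ with $s(x)=h(r(x))$ for $x\in A$, hence $q$-a.e.

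I expect the main obstacle to be precisely this last step: promoting the ``$\mu\times\mu$-a.e.''\ equality to a statement holding at $q$-a.e.\ \emph{individual} points, and in particular controlling the level sets of $r$ when building $h$ — if $r$ assigns positive $q$-mass to a value $c$, one must first argue that $s$ is $q$-a.e.\ constant on $r^{-1}(c)$ before $h$ can be defined there. By contrast, both the inequality $\mathrm{AUC}(r)\ge\mathrm{AUC}(s)$ and the ``if'' direction of the equality case come essentially for free once the symmetrization identity above is in hand.
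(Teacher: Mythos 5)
Your argument is essentially the paper's own proof: the same symmetrization identity isolating the antisymmetric kernel $W(a,b)=p(a)q(b)-p(b)q(a)$, the same pointwise maximization of the weight against $W$, and the same reduction of the equality case to almost-everywhere agreement of the orderings induced by $s$ and $r$. The step you single out as the main obstacle---showing that $s$ is $q$-a.e.\ constant on a positive-$q$-mass level set of $r$ before an increasing $h$ can be defined there---is exactly the step the paper elides with ``i.e., $s=h(r)$,'' and it cannot be closed in general: on $r^{-1}(c)\times r^{-1}(c)$ the weight $p(a)q(b)=c\,q(a)q(b)$ is symmetric in $(a,b)$, so its integral against $w_s$ equals $\tfrac{c}{2}q(r^{-1}(c))^2$ regardless of $s$, and hence a score that matches $r$'s ordering across distinct level sets but varies within $r^{-1}(c)$ still attains $\mathrm{AUC}(s)=\mathrm{AUC}(r)$ while being no function of $r$. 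The ``only if'' direction as stated therefore needs the additional hypothesis that $r(\tilde X)$ is atomless under $q$ (level sets $q$-null); granting that, your order-embedding construction goes through and the two proofs coincide.
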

This result justifies the use of the density ratio \(r(x)\) as an optimal scoring function for discriminating between \(p\) and \(q\); see Appendix \ref{app:optimal-auc} for the full statement and proof. 

Moreover, AUC serves as a useful proxy for the power of the conformal uniformity test, because it quantifies how well the scoring function ranks samples from \( p \) above those from \( q \). Specifically, as the number of calibration samples \( m \to \infty \), the conformal p-value for a test point \( \tilde{X} \sim q \) converges almost surely to its population-level limit:
\[
U \xrightarrow[m\to\infty]{\text{a.s.}} \Pp{X \sim p}{r(X) < r(\tilde{X})} + \xi\, \Pp{X \sim p}{r(X) = r(\tilde{X})},
\]
 where $\xi \sim \p{Unif}{0,1}$. When \( r(x) \) has good separation between the distributions, these p-values tend to concentrate below \( 0.5 \), making them detectably non-uniform.

As the number of test points \( n_q \to \infty \), the empirical CDF \( \hat{G}(u) = \frac{1}{n_q} \sum_{j=1}^{n_q} \bI\{ U_j \le u \} \) converges to the population CDF \( G(u) = \mathbb{P}(U \le u) \), and the Kolmogorov–Smirnov test statistic converges to \( \sup_{u \in [0,1]} |G(u) - u| \), which captures the deviation from uniformity. When AUC exceeds \( 0.5 \), i.e., \( r(x) \) tends to rank \( p \) samples above \( q \), conformal p-values become stochastically smaller than uniform. This deviation drives up the KS statistic and hence the test power. Moreover, the expected conformal p-value under the alternative is directly related to AUC, as the following lemma establishes.

\begin{lemma} \label{lem:auc-expec-alt}
Under the alternative \(H_1: p \neq q\), we have
\begin{align*}
    \E{U} = 1 - \mathrm{AUC}(r) \leq \frac{1}{2} - \frac{TV(p, q)}{2} < \frac{1}{2}
\end{align*}
in the asymptotic limit of $m \to \infty.$
\end{lemma}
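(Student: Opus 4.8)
The plan is to establish the chain of (in)equalities in three separate pieces, working in the $m \to \infty$ asymptotic regime where the conformal p-value for a test point $\tilde X \sim q$ has the almost-sure limit
\[
U_\infty = \Pp{X \sim p}{r(X) < r(\tilde X)} + \xi\, \Pp{X \sim p}{r(X) = r(\tilde X)}, \qquad \xi \sim \mathrm{Unif}[0,1].
\]
First I would identify $\E{U}$ with $1 - \mathrm{AUC}(r)$. Taking the expectation of $U_\infty$ over $\tilde X \sim q$ and the independent $\xi$, the $\xi$ integrates to $1/2$, so
\[
\E{U_\infty} = \Pp{X \sim p,\, \tilde X \sim q}{r(X) < r(\tilde X)} + \tfrac12 \Pp{X \sim p,\, \tilde X \sim q}{r(X) = r(\tilde X)}.
\]
Comparing with the definition of $\mathrm{AUC}(r) = \P{r(X) > r(\tilde X)} + \tfrac12\P{r(X) = r(\tilde X)}$ and using that the three events $\{r(X) > r(\tilde X)\}$, $\{r(X) < r(\tilde X)\}$, $\{r(X) = r(\tilde X)\}$ partition the sample space, one gets $\E{U_\infty} = 1 - \mathrm{AUC}(r)$ after a one-line rearrangement. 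I should be slightly careful to justify interchanging expectation and the a.s.\ limit (dominated convergence, since $U \in [0,1]$).

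Next I would prove the middle inequality $1 - \mathrm{AUC}(r) \le \tfrac12 - \tfrac{TV(p,q)}{2}$, equivalently $\mathrm{AUC}(r) \ge \tfrac12 + \tfrac{TV(p,q)}{2}$. This is the crux of the lemma, and I expect it to be the main obstacle. The natural route is to relate the AUC of the likelihood-ratio score to the optimal Neyman–Pearson test. Recall that $TV(p,q)$ equals the maximal $p$-power minus $q$-power over all tests, achieved by thresholding $r(x)$; equivalently, $TV(p,q) = \sup_t \{\, \Pp{p}{r(X) > t} - \Pp{q}{r(X) > t}\,\}$ (modulo the boundary-randomization subtlety). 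The AUC of $r$ is the integral of the ROC curve $t \mapsto (\Pp{q}{r > t},\ \Pp{p}{r > t})$, and since this ROC curve lies on or above the diagonal and attains a vertical gap of exactly $TV(p,q)$ at some threshold, a geometric argument — the area under a curve that starts at $(0,0)$, ends at $(1,1)$, stays above the diagonal, and reaches height (diagonal value) $+ TV$ at one point — lower-bounds the area by $\tfrac12 + \tfrac{TV}{2}$ (the ROC dominates the piecewise-linear path through that extreme point, which subtends an extra triangle of area $TV/2$ above the diagonal). Making this rigorous requires handling ties/atoms in the distribution of $r$, which is where the $\tfrac12$-correction terms in both $\mathrm{AUC}$ and $TV$ must be tracked carefully; I would either invoke the standard identity $\mathrm{AUC}(r) = \tfrac12 + \tfrac12\!\int |dP_{r} - dQ_{r}|$-type relation for the pushforward distributions of the score, or cite a known result (e.g., that the AUC of the LR statistic equals $\tfrac12(1 + TV)$ when there are no ties and is $\ge$ that in general, or relate it to the fact that $r$ is a sufficient statistic so its pushforwards have the same TV as $p,q$).

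Finally, the strict inequality $\tfrac12 - \tfrac{TV(p,q)}{2} < \tfrac12$ is immediate once we observe $TV(p,q) > 0$ under the alternative $H_1 : p \ne q$ — total variation distance is zero iff the distributions coincide (up to null sets), which is exactly the negation of $H_0$. I would state this as a one-liner. Assembling the three pieces gives the displayed chain. The only genuine mathematical content is the middle step; I would present the ROC/Neyman–Pearson argument in the appendix with the tie-breaking bookkeeping spelled out, and in the main text give just the geometric intuition plus the pointer to the appendix, mirroring how Lemma~\ref{lem:optimal-score-auc} is handled.
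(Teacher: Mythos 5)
Your first and last steps coincide with the paper's own proof: the identification $\mathbb{E}[U] = 1 - \mathrm{AUC}(r)$ via the almost-sure limit of the conformal p-value plus dominated convergence, and the strict inequality from $\mathrm{TV}(p,q) > 0$ under $H_1$, are exactly what the paper does. For the middle inequality $\mathrm{AUC}(r) \ge \frac{1}{2}(1 + \mathrm{TV}(p,q))$ you take a genuinely different route (ROC geometry / Neyman--Pearson) from the paper, which instead proves it as a corollary of Lemma~\ref{lem:optimal-score-auc}: take the binary score $\phi^*(x) = \mathbb{I}[r(x) > 1]$ and compute directly, with $a = \mathbb{P}_p(r(X)>1)$ and $b = \mathbb{P}_q(r(\tilde X)>1)$, that $\mathrm{AUC}(\phi^*) = a(1-b) + \frac{1}{2}\left(ab + (1-a)(1-b)\right) = \frac{1}{2} + \frac{1}{2}(a-b) = \frac{1}{2} + \frac{1}{2}\mathrm{TV}(p,q)$; since $\phi^*$ is just another scoring function, Lemma~\ref{lem:optimal-score-auc} gives $\mathrm{AUC}(r) \ge \mathrm{AUC}(\phi^*)$. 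This two-line comparison absorbs all tie and atom bookkeeping into the $\frac{1}{2}$-term of the AUC definition and needs no ROC machinery.

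One caution about your geometric argument as stated: ``starts at $(0,0)$, ends at $(1,1)$, stays above the diagonal, and reaches a vertical gap of $\mathrm{TV}$ at one point'' is \emph{not} sufficient to lower-bound the area by $\frac{1}{2} + \mathrm{TV}/2$. A monotone curve with all of these properties can hug the diagonal away from the extreme point; for instance, with the extreme point at $(\beta, \beta + \mathrm{TV})$ and $\beta \to 0$, the area can be as small as about $\frac{1}{2} + \mathrm{TV}^2/2$, which is strictly below the target for $\mathrm{TV} \in (0,1)$. What rescues your argument is that the ROC curve of the likelihood-ratio statistic (with randomization at atoms) is \emph{concave}, hence dominates the chords through $(0,0)$, the extreme point, and $(1,1)$; only then does the extra triangle of area $\mathrm{TV}/2$ appear. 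So you must either supply concavity of the LR ROC (equivalently, Neyman--Pearson optimality at every level, not just level $\frac{1}{2}$) as an explicit ingredient, or fall back on the paper's comparison against the single thresholded classifier, which is the cleaner path.
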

This result shows that, under the alternative, conformal p-values skew toward zero, with the extent of skewness proportional to the total variation distance between the two distributions. \cite{hu2024} also motivate the density ratio \( r(x) = p(x)/q(x) \) from an information-theoretic perspective, showing that its variability under \(q\) controls the deviation from uniformity; see Lemma \ref{lem:hu-lei} in the Appendix.

\subsection{Robustness to weak classifiers} 
Our above analysis involves the oracle density ratio $r(x)$. But in practice, the density ratio is often approximated using a classifier trained to distinguish between the true and approximate joint distributions. Accordingly, our analysis explicitly incorporates an error-prone plug-in estimate $\hat{r}$, derived from a potentially weak classifier $\hat{\eta}$ in~\eqref{eq:classifier-density-ratio}.

Our main theoretical result shows that, under mild regularity conditions on the true density ratio, the uniform conformal test retains high power, even when the scoring function is imperfect. Our result specifically relates the error of the estimated density ratio to the performance of the conformal C2ST.  

\begin{assumption}[Low noise condition] \label{ass:bounded-density-diff}
The random variable \(Z := r(\tilde X) - r(\tilde X')\), where \(\tilde X, \tilde X' \iid q\), has a density that is bounded in a neighborhood around zero.
\end{assumption}
We emphasize that Assumption~\ref{ass:bounded-density-diff} is a mild regularity condition typically satisfied in practice. Provided $p$ and $q$ have overlapping support and the true density ratio $r(x)$ is relatively smooth, the difference $Z = r(\tilde X) - r(\tilde X')$ admits a finite density around zero. This precludes degenerate scenarios where $r(x)$ is locally constant. Concretely, Assumption \ref{ass:bounded-density-diff} fails only when the oracle ratio carries almost no ranking signal — that is, when too many pairs $(\tilde X, \tilde X')$
are nearly tied under
$r$. A natural failure case is when both 
$p$ and
$q$ are very flat over broad regions, so that
$r$ varies little and most pairwise comparisons become near-indistinguishable; in this regime the alternative is intrinsically hard for \textit{any} ranking-based test to detect. Theoretically, this requirement corresponds to a pairwise Tsybakov-type margin condition \citep{Audibert_2007} with noise exponent $\alpha=1$. This is strictly weaker than the uniform noise assumption employed by \citet[Section 5.1]{clémençon2006rankingempiricalminimizationustatistics} or the requirement that the density of $r$ itself be uniformly bounded, found in \citet{hu2024}.

\begin{theorem}[Robustness to estimation error] \label{thm:robustness-unif}
Under Assumption~\ref{ass:bounded-density-diff}, let \(\hat{U}\) denote the conformal p-value computed using $m$ calibration points per test point and the approximate score function \(\hat{r}\) whose estimation error is given by $\mathbb{E}_{\tilde X \sim q} \left[ (\hat{r}(\tilde X) - r(\tilde X))^2 \right] \leq \varepsilon^2$. Then under the alternative \(H_1: p \neq q\), there exists \(M > 0\), depending on \(\varepsilon\), \(p\), and \(q\), such that
\[
\mathbb{E}[\hat{U}] - \mathbb{E}[U] \leq \cO(\varepsilon^{2/3}) \quad \text{for all } m > M.
\]
\end{theorem}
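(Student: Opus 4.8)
The plan is to reduce the claim to a bound on the gap in area under the ROC curve, and then to control that gap by a ranking-stability argument. The displayed almost-sure limit of the conformal $p$-value $U$ as $m\to\infty$, given just before Lemma~\ref{lem:auc-expec-alt}, holds for \emph{any} deterministic score function — not just the oracle $r$ — so the same computation gives $\mathbb{E}[U]\to 1-\mathrm{AUC}(r)$ and $\mathbb{E}[\hat U]\to 1-\mathrm{AUC}(\hat r)$, whence $\mathbb{E}[\hat U]-\mathbb{E}[U]=\mathrm{AUC}(r)-\mathrm{AUC}(\hat r)$, which is nonnegative by Lemma~\ref{lem:optimal-score-auc} (this also explains why a one-sided bound is the natural target). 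The threshold $M$ in the statement absorbs the rate at which the finite-$m$ expectations approach these limits; alternatively, coupling $\hat U$ and $U$ on the same calibration sample, test point and tie-breaker shows that $\mathbb{E}[\hat U]-\mathbb{E}[U]$ is at most a constant times the ranking-disagreement probability introduced below, for every finite $m$. It therefore suffices to show $\mathrm{AUC}(r)-\mathrm{AUC}(\hat r)=\mathcal{O}(\varepsilon^{2/3})$.

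For the core step, take $X\sim p$ and $\tilde X\sim q$ independent, set $\Psi(a,b):=\bI\{a>b\}+\tfrac12\bI\{a=b\}$ and $\delta:=\hat r-r$, and write
\[
\mathrm{AUC}(r)-\mathrm{AUC}(\hat r)=\mathbb{E}\big[\,\Psi(r(X),r(\tilde X))-\Psi(\hat r(X),\hat r(\tilde X))\,\big].
\]
The integrand is bounded by $1$ and vanishes unless $r$ and $\hat r$ rank the pair $(X,\tilde X)$ differently (including strict-versus-tie disagreements), and elementary algebra shows any such disagreement forces $|r(X)-r(\tilde X)|\le|\delta(X)|+|\delta(\tilde X)|$. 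Splitting that event at a threshold $t>0$ gives, for every $t$,
\[
\mathrm{AUC}(r)-\mathrm{AUC}(\hat r)\;\le\;\mathbb{P}\big(|r(X)-r(\tilde X)|\le t\big)+\mathbb{P}\big(|\delta(X)|+|\delta(\tilde X)|>t\big).
\]
For the first, \emph{small-gap} term, after the change of measure $\mathbb{E}_{X\sim p}[\,\cdot\,]=\mathbb{E}_{X\sim q}[\,r(X)\,\cdot\,]$, Assumption~\ref{ass:bounded-density-diff} (which bounds the density of $r(\tilde X)-r(\tilde X')$ near zero) yields a bound of order $t$ for $t$ in a neighbourhood of $0$. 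For the second, \emph{large-error} term, a union bound, Chebyshev's inequality and Assumption~\ref{ass:small-estimation-error} (again using the change of measure to transfer the $L^2$ bound from $q$ to $p$) give a bound of order $\varepsilon^2/t^2$. Balancing the two by choosing $t$ of order $\varepsilon^{2/3}$ produces $\mathrm{AUC}(r)-\mathrm{AUC}(\hat r)=\mathcal{O}(\varepsilon^{2/3})$; for $\varepsilon$ bounded away from $0$ the claim is vacuous, and for small $\varepsilon$ this $t$ indeed lies in the neighbourhood where Assumption~\ref{ass:bounded-density-diff} is active.

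The main obstacle is the change of measure between $p$ and $q$: both the small-gap probability and the estimation-error tail are most naturally expressed with the calibration point drawn from $p$, whereas Assumptions~\ref{ass:bounded-density-diff} and~\ref{ass:small-estimation-error} are phrased under $q$, so passing between them costs a factor of $r(X)=p(X)/q(X)$. Making this rigorous requires this ratio to be essentially bounded (or at least to possess a suitable moment under $q$); this regularity, together with the density bound in Assumption~\ref{ass:bounded-density-diff} and the constant in Assumption~\ref{ass:small-estimation-error}, is what ``depending on $\varepsilon$, $p$, and $q$'' is meant to capture. A secondary, routine nuisance is the bookkeeping for ties — which occur with probability zero under Assumption~\ref{ass:bounded-density-diff}, since $r(\tilde X)$ is then atomless, but must still be carried through $\Psi$ — and checking that, under whichever boundedness condition one adopts, the first-term bound is genuinely linear in $t$ rather than merely $\mathcal{O}(\sqrt{t})$, since it is precisely this linearity that yields the exponent $2/3$ rather than a weaker rate.
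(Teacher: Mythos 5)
Your route is genuinely different from the paper's, and the difference matters. You reduce the problem to the AUC gap $\mathrm{AUC}(r)-\mathrm{AUC}(\hat r)$ and bound it by a raw ranking-disagreement \emph{probability} under $p\times q$; the paper instead starts from the identity
\[
2\,\mathbb{E}[\hat{U}] \;=\; \mathbb{E}\big[\mathbb{I}(\hat r(X)<\hat r(\tilde X))\big]+\mathbb{E}\big[\mathbb{I}(\hat r(X)\le \hat r(\tilde X))\big]
\]
and immediately applies the importance reweighting $\mathbb{E}_{X\sim p}[f(X)]=\mathbb{E}_{\tilde X'\sim q}[r(\tilde X')f(\tilde X')]$, which collapses everything to $2\,\mathbb{E}[\hat U]=1-\mathbb{E}\big[(r(\tilde X')-r(\tilde X))\,\mathbb{I}(\hat r(\tilde X')>\hat r(\tilde X))\big]$ with $\tilde X,\tilde X'\iid q$. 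This buys two things you do not get. First, both assumptions are stated under $q$, and after this identity they apply directly — there is no residual change of measure, so the essential boundedness of $r$ (or a moment of $r$ under $q$ beyond $\mathbb{E}[Z^2]<\infty$) that your argument needs, and that you correctly flag as an unresolved obstacle, is simply not required. Second, the error term the paper controls is the \emph{weighted} disagreement $\mathbb{E}[|Z|\,\mathbb{I}(|\Delta|\ge|Z|)]$ rather than a disagreement probability: the factor $|Z|$ is small exactly where disagreement is likely, so the small-gap piece becomes $\mathbb{E}[|Z|\,\mathbb{I}(|Z|\le t)]\le Ct^2$ under Assumption~\ref{ass:bounded-density-diff}, and the large-error piece is handled by Cauchy--Schwarz plus Markov as $2\sqrt{\mathbb{E}[Z^2]}\,\varepsilon/t$. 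The paper balances $Ct^2$ against $\varepsilon/t$ at $t\asymp\varepsilon^{1/3}$, whereas you balance $Ct$ against $\varepsilon^2/t^2$ at $t\asymp\varepsilon^{2/3}$; both land on $\varepsilon^{2/3}$, but your linear-in-$t$ bound for the first term only holds if the change of measure costs a bounded factor — otherwise Cauchy--Schwarz degrades it to $\mathcal{O}(\sqrt{t})$ and the rate worsens, exactly the failure mode you anticipate.

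In short: your proof is correct \emph{modulo} the extra regularity you impose on $r=p/q$, but that regularity is not among the theorem's hypotheses, so as written the argument does not prove the stated result. The missing idea is the importance-reweighting step; inserting it at the start (before passing to AUC) removes the $p$-versus-$q$ mismatch and the tie/linearity bookkeeping in one stroke. Your observation that $\mathbb{E}[\hat U]-\mathbb{E}[U]=\mathrm{AUC}(r)-\mathrm{AUC}(\hat r)\ge 0$ in the $m\to\infty$ limit is a nice framing the paper does not make explicit, and it cleanly explains why only a one-sided bound is needed.
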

This theorem shows that the uniform test enjoys a remarkable degree of robustness: even when the classifier is weak or undertrained, the resulting p-values still exhibit systematic deviation from uniformity under the alternative. In particular, the expected p-value computed using an approximate score \(\hat{r}\) remains close to the ideal value obtained from the oracle score \(r\), with the error controlled by the quality of the approximation. As long as the estimated score preserves a reasonable approximation to the true ranking, the test maintains power. This highlights a key advantage of the conformal approach: it leverages relative score orderings rather than relying on absolute classifier accuracy; We defer the proof to Appendix \ref{app:robustness-estimation-uniform}. We further show in Lemma \ref{lem:var-conformal-p-val} in the Appendix, assuming zero tie probability for continuous classifier outputs, the variance of the conformal p-values scales as $\cO(1/m)$. Consequently, the p-values converge quickly to their true non-uniform values under the alternative as one increases $m$, leading to a better test power.

\section{The multiple test}
\label{sec:multiple_test}
A potential limitation of the uniform test described above is the need to generate a fresh calibration set for each test point. In many NPE settings, this is not a big issue; drawing from the true joint distribution \( p(\theta, y) \) will often be cheap, as it does not require forward simulation through the generative model. 
For scenarios where sampling from $p(\theta, y)$ is computationally expensive (e.g., high-fidelity physical simulators), we analyze the Conformal Multiple test. This variant utilizes a single shared calibration set, matching the inference cost of the standard C2ST while still retaining higher power and robustness under classifier degradation as we show shortly.

To correct for the dependence introduced by using a shared calibration set, \citet{hu2024} propose the average of the conformal p-values computed using a single calibration set as a two-sample U-statistic. They derive an asymptotically valid test statistic by normalizing the deviation of the average p-value from $1/2$, which is the expected value under the null. In our setting, we can simplify their test statistic to obtain:
\begin{align}
    \hat T = \frac{\frac{1}{2} - \frac{1}{n_q}\sum_{j=1}^{n_q}\hat U_j}{\hat \sigma/\sqrt{n_p}}
\end{align}
where $\hat U_j := \frac{1}{n_p} \sum_{i=1}^{n_p} h_{\hat r}(X_i, \tX_j; \xi_j)$ with the kernel $h_s(x,y;\xi):=\bI\{s(x)<s(y)\}+\xi\,\bI\{s(x)=s(y)\}$. Note that $\hat U_j$ is obtained from the entire calibration set (common to all test points) with $\hat r(\cdot)$ as the scoring function, and $\hat \sigma$ is the asymptotic estimated standard deviation of $\frac{\sqrt{n_p}}{n_q}\sum_{j=1}^{n_q}\hat U_j$; see Appendix \ref{app:multiple-test-theory} for details. Under the null, $\hat T \sim \cN(0, 1)$, whereas under the alternative, $\hat T \to \infty$ asymptotically under the assumptions mentioned by \citet[Theorem 2]{hu2024}.
We now give our second main result highlighting the robustness of the multiple test, which directly inherits our analysis from the uniform test. Before proceeding, we make the following tie assumption.

\begin{assumption}[Rare ties condition] \label{ass:growth}
The score $\hat r$ is almost surely continuous (so ties have zero probability), or ties occur at rate \(\cO_p(1)\). 
\end{assumption}
Note that \cref{ass:growth} is mild, and permits a wide range of practical scenarios in which \(\hat r(x)\) is estimated via smooth classifier outputs. It can fail when the learned score $\hat r$
is heavily discretized, quantized, clipped, or nearly constant, since such scores produce a non-negligible mass of exact ties that violate the rate-$O_p(1)$
condition.
\begin{theorem}\label{thm:robustness-multi}
    Denote by $\hat T(s)$ the test statistic obtained using the score function $s$. Further define $\Delta_s:=\textrm{AUC}(s)-\frac{1}{2}$, and let Assumption~\ref{ass:growth} and the same conditions as Theorem~\ref{thm:robustness-unif} hold. Then under the alternative, i.e., $\Delta_r>0$, for large $n_p, n_q$, we have that 
    \begin{enumerate}[label=(\roman*)]
    \item the oracle statistic scales as $\hat{T}(r) = \mathcal{O}_p(\sqrt{\min\{n_p, n_q\}} \Delta_r)$.
    \item the relative oracle-estimate gap scales as:
    \begin{equation*}
        \frac{\hat{T}(\hat{r}) - \hat{T}(r)}{\hat{T}(r)} = \mathcal{O}_p(\epsilon^{2/3})
    \end{equation*}
\end{enumerate}
\end{theorem}
This theorem again highlights the remarkable robustness obtained by the simple rank-based post–processing used by our conformal statistic. The result shows that, under the same sample budget, the oracle--estimate gap in our conformal two sample U-statistic grows much slower than the oracle statistic itself, retaining near oracle power. Consequently, unlike ordinary C2ST, even when $\hat r$ is imperfect, the degradation in the multiple test's power is governed by \textit{misranking}, rather than inaccurate probability (or ratio) estimation.


\section{Experiments}
We now empirically evaluate the performance of the conformal C2ST against baseline methods in a scenario with controlled perturbations of a known reference distribution. The experiments are designed to address two key questions. The first concerns \textit{power under proper training:} when the classifier is well trained, how does the conformal C2ST compare to competing methods? The second concerns \textit{robustness to classifier degradation:} as we progressively degrade the quality of the classifier, does the conformal C2ST retain power better than the ordinary C2ST?  

\begin{figure*}[]
    \centering
    \begin{subfigure}[t]{\linewidth}
        \centering
        \includegraphics[width=\linewidth]{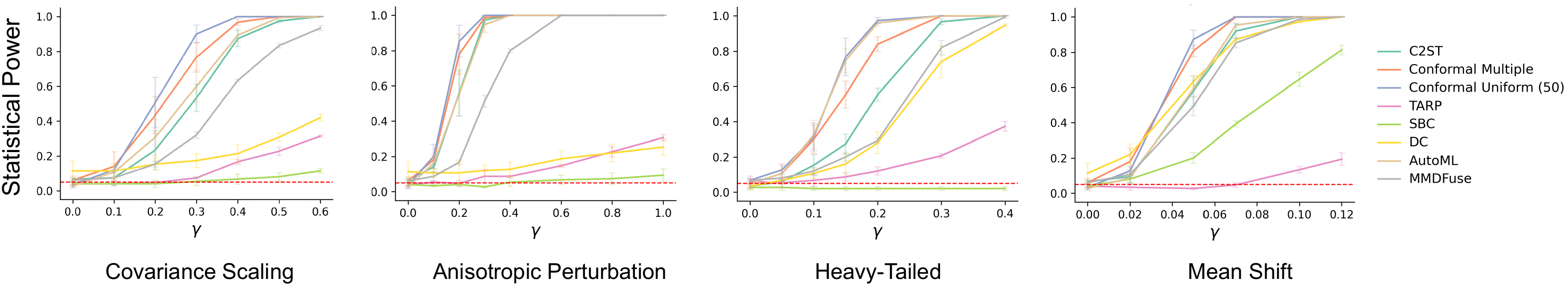} 
        \caption{Power curves as a function of perturbation level $\gamma$ at fixed classifier quality ($\beta = 0$).}
        \label{fig:main_exp_gamma}
    \end{subfigure}
    
    \vspace{1em}

    \begin{subfigure}[t]{\linewidth}
        \centering
        \includegraphics[width=\linewidth]{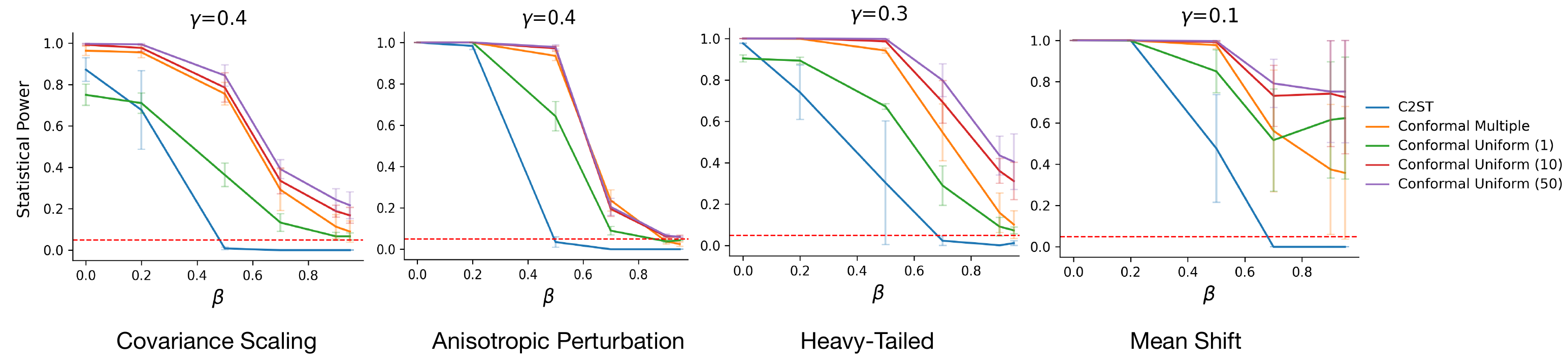}
        \caption{Power curves as a function of classifier degradation level $\beta$ at fixed perturbation strength $\gamma$.}
        \label{fig:main_exp_beta}
    \end{subfigure}
    
    \caption{Statistical power of C2ST and conformal variants across benchmark perturbations. Panel (a) evaluates sensitivity to posterior mismatch; panel (b) evaluates robustness to classifier degradation.}
    \label{fig:main_exp_combined}
\end{figure*}

\subsection{Controlled posterior perturbations.}
\label{sec:posterior-perturb}

Our first set of experiments involve a controlled simulation environment with a known ground-truth $p(\theta \mid y)$. From this ground truth, we generate a series of flawed approximations $q(\theta \mid y)$, by systematically applying a controlled perturbation of $p$. The magnitude of the perturbation is controlled by a scalar $\gamma$, which acts as a ``difficulty dial.'' When \( \gamma = 0 \), the approximation is perfect $q=p$, allowing us to assess a method's Type-I error rate.  As \( \gamma \) increases, the approximation becomes progressively worse, allowing us to measure a test's power.  

Our perturbations, described in detail in Appendix \ref{app:exp_details}, are designed to mimic common failure modes in NPE, such as biased means, overdispersion, miscalibrated covariance structure, or mode collapse. This framework allows us to directly assess whether a testing method can reliably detect meaningful discrepancies in a setting that mirrors real-world validation challenges. Specifically, we include four perturbation types from NPTBench \citep{chen_etal_NPTBench2024} in the main text and defer the rest to Appendix \ref{app:exp_details}.

\paragraph{Testing classifier degradation.}
After training a classifier on a given benchmark problem at a fixed perturbation level \( \gamma \), we generate a family of degraded classifiers by linearly interpolating the trained model parameters with those of a randomly initialized model:
\[
\psi_\beta := (1 - \beta) \cdot \psi_{\hat{\eta}} + \beta \cdot \psi_{\text{rand}}, \quad \beta \in [0, 1],
\]
where \( \psi_{\hat{\eta}} \) and \( \psi_{\text{rand}} \) are the parameter vectors of the trained and randomly initialized classifiers, respectively. This setup allows us to systematically degrade the classifier's quality by varying \( \beta \), from a fully trained model (\( \beta = 0 \)) to a random, uninformative one (\( \beta = 1 \)). Class probabilities and nonconformity scores are calculated from the degraded classifier. We then evaluate the behavior of both the standard C2ST and the conformal C2ST across this interpolation path, providing a natural stress test of robustness.

\paragraph{Experiment settings and baselines.} We benchmark the conformal C2ST against several well-established methods for assessing the quality of a neural posterior estimate. These include the C2ST \citep{lopez2016revisiting}, described in Section~\ref{sec:prelims}; SBC \citep{talts2018validating}, which computes rank statistics for each marginal of \( q(\theta \mid y) \) and checks for uniformity under the true posterior; TARP \citep{lemos2023sampling}, which uses randomly sampled reference points and distance-based statistics to construct a test that is both necessary and sufficient for posterior validity; and {Discriminative Calibration (DC)} which uses a multiclass classifier to get the strongest log-predictive-density (LPD) statistic \citep[Algorithm 1]{yao2023discriminativecalibrationcheckbayesian} and AutoML \cite{kübler2023automltwosampletest}. We also include MMDFuse \cite{biggs2023mmdfuselearningcombiningkernels}, a kernel two-sample test that fuses
  multiple kernels without data splitting as a kernel discrepancy baseline. We exclude comparisons with $\ell$-C2ST \cite{linhart2023lc2stlocaldiagnosticsposterior} since its primary objective is assessing the local performance of the NPE model at a specific conditioning input $x
_o$, whereas our framework focuses on global testing across all $x$ with exact finite-sample guarantees.

\begin{figure*}[ht]
    \centering
    \includegraphics[width=0.7\linewidth]{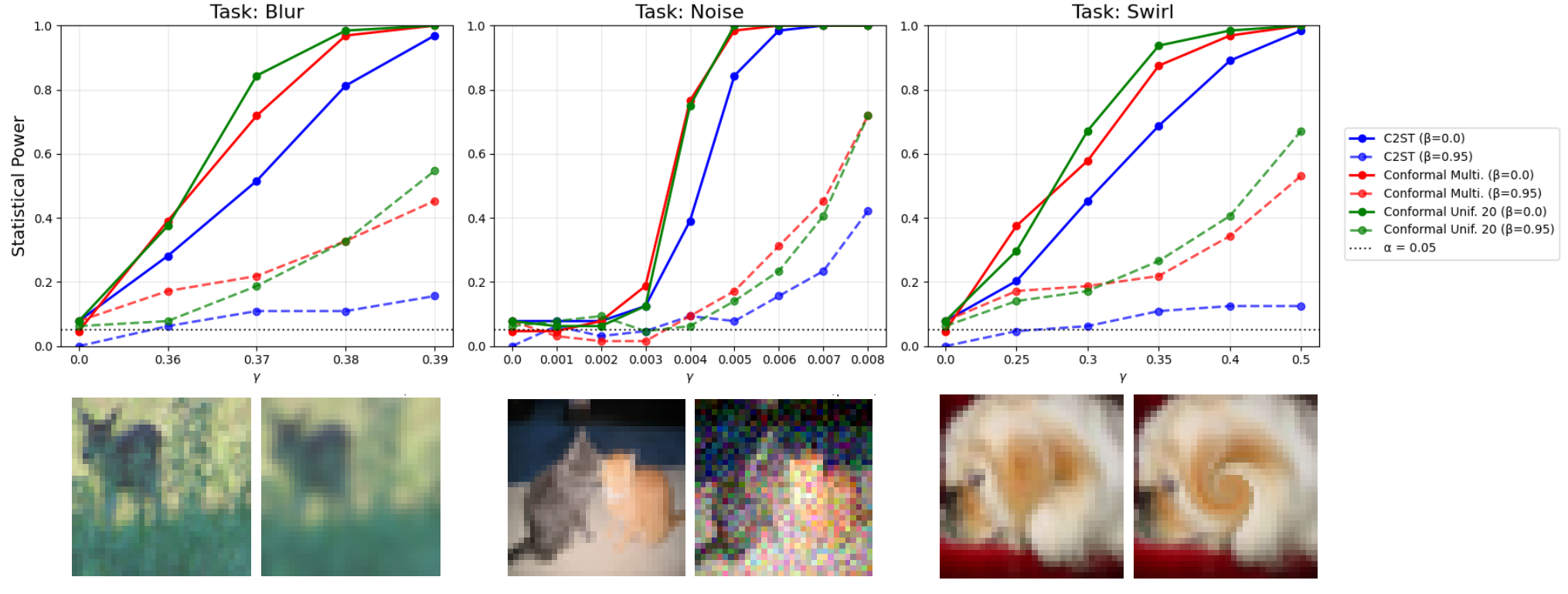}
    \caption{Conformal C2ST consistently outperforms standard C2ST across different tasks on CIFAR-10. Solid lines correspond to well-trained classifiers ($\beta=0.0$), dashed lines to weaker classifiers ($\beta=0.95$). Representative  examples for each task are shown below.}
    \label{fig:img}
\end{figure*}
\textbf{Evaluation setup.} We adapt each baseline to the validation task and compute rejection rates (at significance level $\alpha=0.05$) across 200 independent trials. All methods share a common base of $N=1000$ observations but differ in how many
additional draws they require, and from which distribution (Table~\ref{tab:budgets}).
SBC, TARP, and DC each need $m$ posterior samples from $q(\theta\mid y)$ per
observation ($N\times m$ from $q$), the costly direction when sampling from $q$ is
the bottleneck; DC is heavier still, training a multiclass classifier whose input
dimension grows with $m$ and bootstrapping at evaluation. Conformal Uniform also
pays an $m$-fold cost, but on the true joint $p(\theta,y)$---cheap in standard NPE,
requiring only the simulator and prior---keeping its $q$-budget at $N$, while
Conformal Multiple adds nothing, reusing a single shared calibration set to \textit{exactly
match} C2ST. MMDFuse, being kernel-based and split-free, simply takes the pooled train $+$ test
$2N$ samples from both $p$ and $q$.

\begin{table}[h]
  \caption{Sampling budgets per method (samples used to form the test statistic;
    $N=1000$, $\alpha=0.05$).}
  \label{tab:budgets}
  \centering
  \small
  \begin{tabular}{lcc}
    \toprule
    Method & from $p$ & from $q$ \\
    \midrule
    C2ST, AutoML, Conformal Multiple & $N$ & $N$ \\
    Conformal Uniform($m$), $m\in\{1,10,50\}$ & $mN$ & $N$ \\
    SBC ($m{=}200$), TARP ($m{=}200$) & $N$ & $mN$ \\
    DC ($m{=}10$) & $N$ & $mN$ \\
    MMDFuse & $2N$ & $2N$ \\
    \bottomrule
  \end{tabular}
\end{table}
All experiments use three random seeds, and we report the average results. We provide more details in Appendix~\ref{app:training}.

\paragraph{Results.}

As shown in Figure \ref{fig:main_exp_gamma}, our conformal methods consistently achieve higher power than the standard C2ST and match or outperform SBC, DC and TARP across all types of model error. Most notably, DC fails to control Type I error, rendering it an invalid test in this setting.  The practical benefit of our approach is its ability to detect subtle deviations from the true $p$. The Conformal Uniform and Conformal Multiple tests reliably identify misspecifications at low values of $\gamma$ where other methods fail. While most tests can spot large errors, our methods provide a much lower detection threshold, making them more useful for identifying the small but significant imperfections common in generative models.

In our second experiment, we tested each method's robustness to a weak classifier. For a fixed model error $\gamma$, we systematically degraded the classifier's performance from well-trained ($\beta = 0$) to random ($\beta=1$), as described previously. Figure \ref{fig:main_exp_beta} reveals that the standard C2ST is quite brittle; its power collapses as soon as the classifier's quality degrades. In contrast, our conformal methods are highly robust, maintaining high power even when the classifier is far from optimal. This resilience stems from a fundamental advantage. While the C2ST requires an accurate classifier to draw a sharp decision boundary, our conformal methods only need the classifier's scores to provide a weak but informative ranking. This makes them far more reliable in practice, where perfectly trained classifiers are rarely available.

\begin{table*}[t]
  \caption{Gravitational-lensing benchmark. Power (\%) over $50$
    replicates. \emph{(a)} Sensitivity at fixed classifier quality ($\beta=0$) as
    the score is interpolated from exact ($\gamma=0$) to fully biased
    ($\gamma=1$). \emph{(b)} Robustness at fixed bias ($\gamma=1.0$) as the
    classifier is degraded toward random ($\beta\to1$). Best valid result per row
    in \textbf{bold}; CM $=$ Conformal Multiple, CU(50) $=$ Conformal Uniform
    ($m=50$). DC inflates Type-I error (see $\gamma=0$) and is invalid here.}
  \label{tab:lensing}
  \centering
  \small
  \begin{minipage}[t]{0.63\linewidth}
    \centering
    {\footnotesize (a) Sensitivity: power (\%) vs.\ bias $\gamma$ \ ($\beta=0$).}\\[3pt]
    \setlength{\tabcolsep}{3pt}
    \begin{tabular}{l cccccc cc}
      \toprule
      $\gamma$ & AutoML & C2ST & DC & MMDFuse & SBC & TARP & \textbf{CM} & \textbf{CU(50)} \\
      \midrule
      0.00 &   4 &   2 &  10 &   8 &  10 &   4 &   2 &   5 \\
      0.50 &   6 &  14 &  22 &  12 &   4 &   2 & \textbf{24} &  16 \\
      0.60 &  28 &  28 &  46 &  14 &   6 &   6 &  50 & \textbf{66} \\
      0.70 &  70 &  76 &  52 &  18 &   6 &   6 &  92 & \textbf{98} \\
      0.80 & 100 & 100 &  66 &  24 &   8 &   6 & 100 & 100 \\
      0.90 & 100 & 100 &  80 &  24 &   2 &   2 & 100 & 100 \\
      1.00 & 100 & 100 &  98 &  28 &   6 &  24 & 100 & 100 \\
      \bottomrule
    \end{tabular}
  \end{minipage}
  \hfill
  \begin{minipage}[t]{0.35\linewidth}
    \centering
    {\footnotesize (b) Robustness: power (\%) vs.\ degradation $\beta$ \ ($\gamma=1.0$).}\\[3pt]
    \setlength{\tabcolsep}{5pt}
    \begin{tabular}{l cccc}
      \toprule
      $\beta$ & AutoML & C2ST & \textbf{CM} & \textbf{CU(50)} \\
      \midrule
      0.00 & 100 & 100 & 100 & 100 \\
      0.60 & 100 & 100 & 100 & 100 \\
      0.70 &  84 &  92 &  98 & \textbf{100} \\
      0.80 &  44 &  40 &  74 & \textbf{86}  \\
      0.90 &  10 &  14 & \textbf{30} &  16 \\
      0.95 &   2 &   6 & \textbf{16} &   2 \\
      \bottomrule
    \end{tabular}
  \end{minipage}
\end{table*}

\subsection{High-dimensional image experiments}
\label{sec:high-dim-example}
To demonstrate the broad utility of the Conformal C2ST, we also evaluated it on the general-purpose task of detecting distributional shift in high-dimensional images. We used the CIFAR-10 dataset \citep{krizhevsky2009learning} to create a two-sample problem: distinguishing original, clean images from versions altered by one of three corruptions: Gaussian blur, swirl distortion, or additive Gaussian noise. A parameter $\gamma$ controlled the severity of the corruption. We compare the \emph{clean} and \emph{corrupted} distributions using \textsc{C2ST}, Conformal-multiple, and Conformal-uniform ($m{=}20$). Power is reported (i) as a function of corruption strength $\gamma$ with a well-trained classifier, and (ii) as a function of classifier quality $\beta$ at fixed $\gamma$ (larger $\beta$ denotes a weaker classifier). As summarized in Figure \ref{fig:img}, the Conformal C2ST achieved the highest power across all corruption types and severity levels. Crucially, it maintained its superior performance even when the underlying classifier was weak, confirming its robustness and practical value for general-purpose generative model validation.

\subsection{Gravitational lensing experiment}

To test our methods on a real-data NPE problem, we evaluate on a scientific inverse problem, following the gravitational-lensing benchmark of \citet{lemos2023sampling}. We place a Gaussian prior over $16\times16$ source images
($\theta \in \mathbb{R}^{256}$) drawn from galaxy data (PROBES), and use a linear
lensing-style forward model $y = A\theta + \varepsilon$ comprising an affine warp,
Gaussian blur, and additive Gaussian noise. Posterior samples are obtained from
the reverse SDE of a score-based diffusion via conditional score decomposition.

Unlike \citet{lemos2023sampling}, who compare only the exact and fully biased
endpoints, we interpolate continuously between them in order to measure
\emph{how early} a bias is detected:
\[
  s_\gamma(\theta_t \mid x)
  \;=\; (1-\gamma)\, s_{\mathrm{true}}(\theta_t \mid x)
  \;+\; \gamma\, s_{\mathrm{biased}}(\theta_t \mid x),
\]
so that $\gamma=0$ yields the exact posterior sampler and $\gamma=1$ the fully
biased one. Mirroring Section~\ref{sec:posterior-perturb}, we run two experiments: a
\emph{sensitivity} study that fixes a well-trained classifier ($\beta=0$) and
varies the bias $\gamma$, and a \emph{robustness} study that fixes the fully
biased sampler ($\gamma=1.0$) and degrades the classifier toward random
($\beta\to1$). In each replicate we use $n_{\mathrm{true}}=n_{\mathrm{fake}}=2000$
with one posterior sample per $x$ for training, and report rejection rates over
$50$ independent test sets; MMDFuse uses the matched train+test budget.

\textbf{Results.} As shown in Table~\ref{tab:lensing}, the conformal tests stay
calibrated near the nominal $5\%$ level when the sampler is exact ($\gamma=0$),
whereas DC is inflated and hence invalid here. As the bias grows, both conformal
variants detect it at smaller $\gamma$ than every baseline, while SBC and TARP remain near noise even at $\gamma=1.0$,
consistent with their known insensitivity to joint discrepancies. Under classifier
degradation (panel b), ordinary C2ST and AutoML collapse quickly, while Conformal-Multiple and Conformal-Uniform$(50)$
retain substantially more power.  The trends from
our controlled benchmarks thus carry over effectively to a real, high-dimensional scientific
inverse problem.

\section{Conclusion}
We introduced the \emph{Conformal C2ST}, which converts the
scores of any classifier---however weak, biased, or overfit---into exact,
finite-sample $p$-values. Validity follows from conformal exchangeability and holds
for any scoring function, while power is governed
by the classifier's ranking ability (its AUC) rather than its classification accuracy. We instantiated this idea in two
variants---a uniform test with exact finite-sample validity, and a budget-matched
multiple test---and proved that both retain non-trivial power under the
alternative, degrading gracefully as $O(\varepsilon^{2/3})$ in the estimation
error of the density ratio. Among classifier-based
diagnostics, both variants---the exact uniform test
and the budget-matched multiple test---attain state-of-the-art power, detecting
subtle discrepancies that competitors miss while staying robust as the classifier
degrades, across Gaussian perturbations, image corruptions, and a real
gravitational-lensing problem. The broader message is: a near-Bayes-optimal classifier
is not strictly required for reliable model validation---a weak but informative ranking
suffices.

\paragraph{Limitations.} \label{sec:limitations}

Our work has several limitations. First our approach is designed to assess whether the learned posterior $q$ is a faithful approximation of the model's true posterior $p$. We do not address the separate, crucial problem of \textit{statistical} model validation, where $p$ may not accurately reflect the real-world data-generating process in the $\cM$-open scenario.

Our test is also \emph{global}: it certifies whether $q$ matches $p$ over the full joint distribution but does not localize where a mismatch occurs. 
Moreover, our exact finite-sample guarantee (Lemma~\ref{lem:uniformity}) covers only
the uniform test; the budget-matched multiple test is only asymptotically valid. Relatedly, the Uniform variant's higher power relies on cheap draws from $p$ available in most NPE settings; for expensive or black-box simulators, however, these extra calibration samples are impractical,
leaving only the budget-matched Multiple test. Finally, our theoretical analysis relies on bounding arguments that may be conservative; tighter techniques could potentially yield sharper error bounds.

\section*{Impact Statement}
This paper presents work whose goal is to advance the field of machine learning for scientific problems. There are many potential societal consequences of our work, none
which we feel must be specifically highlighted here.



\bibliography{ref}
\bibliographystyle{icml2026}


\newpage

\newpage
\onecolumn
\appendix

\section{Theoretical Results}
In this section, we restate our key lemmas in full and provide rigorous proofs to support our theoretical claims.

\subsection{Proofs of Lemma \ref{lem:optimal-score-auc} and \ref{lem:auc-expec-alt}} \label{app:optimal-auc}

\begingroup
\renewcommand{\thelemma}{\ref{lem:optimal-score-auc}}
\begin{lemma}[Full statement]
Let \( p \) and \( q \) be two probability densities on a measurable space \( \mathcal{X} \), such that \( p(x) > 0 \) and \( q(x) > 0 \) for all \( x \in \mathcal{X} \). Let \( r(x) := \frac{p(x)}{q(x)} \) denote the oracle density ratio. For any measurable scoring function \( s: \mathcal{X} \to \mathbb{R} \), define the area under the ROC curve (AUC) as
\[
\mathrm{AUC}(s) := \mathbb{P}[s(X) > s(\tilde X)] + \frac{1}{2} \mathbb{P}[s(X) = s(\tilde X)],
\]
where \( X \sim p \) and \( \tilde X \sim q \) are independent.

Then \( \mathrm{AUC}(s) \leq \mathrm{AUC}(r) \), with equality if and only if there exists a strictly increasing function \( h \) such that \( s(x) = h(r(x)) \) for \(q \)-almost every \( x \).
\end{lemma}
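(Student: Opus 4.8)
The plan is to reduce the AUC comparison to an elementary pointwise inequality via an antisymmetrization trick. Write $\mathrm{sign}(t) := \bI\{t>0\} - \bI\{t<0\}$, and let $\mathbb{E}_{q\times q}$ denote expectation over an independent pair $\tilde X, \tilde X' \iid q$. First I would rewrite the AUC using the identity $\bI\{a>b\} + \frac{1}{2}\bI\{a=b\} = \frac{1}{2} + \frac{1}{2}\mathrm{sign}(a-b)$:
\[
\mathrm{AUC}(s) = \frac{1}{2} + \frac{1}{2}\,\mathbb{E}_{X \sim p,\,\tilde X \sim q}\!\left[\mathrm{sign}\big(s(X) - s(\tilde X)\big)\right].
\]
Then, relabelling the two integration variables, using that $\mathrm{sign}$ is odd, and invoking the pointwise identity $p(x)q(x') - p(x')q(x) = q(x)q(x')\,(r(x) - r(x'))$, I would show
\[
\mathbb{E}_{X \sim p,\,\tilde X \sim q}\!\left[\mathrm{sign}\big(s(X) - s(\tilde X)\big)\right] = \frac{1}{2}\,\mathbb{E}_{q\times q}\!\left[\mathrm{sign}\big(s(\tilde X) - s(\tilde X')\big)\,\big(r(\tilde X) - r(\tilde X')\big)\right].
\]
There is no integrability concern, since $\mathbb{E}_{\tilde X \sim q}\,|r(\tilde X)| = \int p = 1$, so the integrand on the right is dominated by $|r(\tilde X) - r(\tilde X')| \in L^1(q\times q)$.

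The heart of the argument is the elementary bound $\mathrm{sign}(a)\,b \le |b|$ for all real $a,b$, with equality precisely when $b = 0$ or $\mathrm{sign}(a) = \mathrm{sign}(b)$. Applying it with $a = s(\tilde X) - s(\tilde X')$, $b = r(\tilde X) - r(\tilde X')$ and taking expectations yields $\mathrm{AUC}(s) \le \frac{1}{2} + \frac{1}{4}\,\mathbb{E}_{q\times q}\big[|r(\tilde X)-r(\tilde X')|\big]$. Specializing the two displays above to $s = r$ turns every inequality into an equality (since $\mathrm{sign}(t)\,t = |t|$), so the right-hand side equals $\mathrm{AUC}(r)$; this gives $\mathrm{AUC}(s) \le \mathrm{AUC}(r)$.

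For the equality case I would write
\[
\mathrm{AUC}(r) - \mathrm{AUC}(s) = \frac{1}{4}\,\mathbb{E}_{q\times q}\Big[\,|r(\tilde X) - r(\tilde X')| - \mathrm{sign}\big(s(\tilde X) - s(\tilde X')\big)\,\big(r(\tilde X) - r(\tilde X')\big)\,\Big],
\]
an expectation of a nonnegative integrand. So $\mathrm{AUC}(s) = \mathrm{AUC}(r)$ iff, for $q\times q$-almost every $(x,x')$, either $r(x)=r(x')$ or $\mathrm{sign}(s(x)-s(x')) = \mathrm{sign}(r(x)-r(x'))$; equivalently, the pair $(r(X), s(X))$ with $X \sim q$ is comonotone. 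The "if" direction of the lemma is then immediate: $s = h\circ r$ with $h$ strictly increasing makes the condition hold everywhere, hence equality holds. For the converse I would extract $h$ from the comonotonicity condition by a standard quantile coupling: letting $F_R, F_S$ be the CDFs of $R = r(X)$ and $S = s(X)$ under $q$, comonotonicity forces $S = F_S^{-1}(F_R(R))$ almost surely, so one may take $h := F_S^{-1}\circ F_R$, which is nondecreasing; the strict-sign condition rules out $h$ being flat across distinct values attained by $R$, so $h$ can be taken strictly increasing on the range of $r$ and then extended to a strictly increasing function on $\mathbb{R}$. Finally, since $p$ and $q$ are strictly positive, "$q$-almost everywhere" and "$p$-almost everywhere" coincide, which is the sense intended in the statement.

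The only genuinely non-routine step is this last extraction of a \emph{strictly} increasing $h$ from the comonotonicity condition: one must handle atoms of the law of $r(X)$ under $q$ (the characterization is cleanest when $r$ is atomless) and keep careful track of the various almost-sure qualifiers. Everything else is bookkeeping — the rewriting in the first display, the change of variables in the second, and the pointwise inequality.
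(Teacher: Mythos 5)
Your proof is correct and follows essentially the same route as the paper's: antisymmetrizing the AUC kernel, reducing to the pointwise bound $\mathrm{sign}(a)\,b \le |b|$ against the antisymmetric weight $p(x)q(\tilde x) - p(\tilde x)q(x) = q(x)q(\tilde x)\,(r(x)-r(\tilde x))$, and characterizing equality via agreement of the induced orderings. If anything, your treatment of the equality case is more careful than the paper's, which does not note that equality constrains $s$ only on pairs with $r(x)\neq r(\tilde x)$ and omits the extraction of a strictly increasing $h$ that you spell out.
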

\addtocounter{lemma}{-1}
\endgroup
\begin{proof}
Define
\[
\phi_s(x, \tilde x) := \mathbb{I}[s(x) > s(\tilde x)] + \frac{1}{2} \mathbb{I}[s(x) = s(\tilde x)],
\]
so that
\[
\mathrm{AUC}(s) = \iint \phi_s(x, \tilde x) \, p(x) q(\tilde x) \, dx d\tilde x.
\]
Define the antisymmetric part
\[
a_s(x, \tilde x) := \phi_s(x, \tilde x) - \phi_s(\tilde x, x) \in \{-1, 0, 1\}.
\]
Note that \( \phi_s(x, \tilde x) + \phi_s(\tilde x, x) = 1 \), hence
\[
\phi_s(x, \tilde x) = \frac{1}{2} + \frac{1}{2} a_s(x, \tilde x),
\]
and therefore
\[
\mathrm{AUC}(s) = \frac{1}{2} + \frac{1}{2} \iint a_s(x, \tilde x) \, p(x) q(\tilde x) \, dx d\tilde x.
\]

Define the antisymmetric function
\[
W(x, \tilde x) := p(x) q(\tilde x) - p(\tilde x) q(x),
\]
so that
\[
\iint a_s(x, \tilde x) \, p(x) q(\tilde x) \, dx d\tilde x = \frac{1}{2} \iint a_s(x, \tilde x) W(x, \tilde x) \, dx d\tilde x,
\]
and thus
\[
\mathrm{AUC}(s) = \frac{1}{2} + \frac{1}{4} \iint a_s(x, \tilde x) W(x, \tilde x) \, dx d\tilde x.
\]

Now for each \( (x, \tilde x) \), the value \( a_s(x, \tilde x) \in \{-1, 0, 1\} \) that maximizes the product \( a_s(x, \tilde x) W(x, \tilde x) \) is \( \operatorname{sign}(W(x, \tilde x)) \). Therefore, the function
\[
a^*(x, \tilde x) := \operatorname{sign}(W(x, \tilde x))
\]
maximizes the integral. 

Next, define the likelihood ratio \( r(x) := \frac{p(x)}{q(x)} \). Then
\[
r(x) > r(\tilde x) \iff \frac{p(x)}{q(x)} > \frac{p(\tilde x)}{q(\tilde x)} \iff p(x) q(\tilde x) > p(\tilde x) q(x) \iff W(x, \tilde x) > 0,
\]
so
\[
a_r(x, \tilde x) := \operatorname{sign}(r(x) - r(\tilde x)) = \operatorname{sign}(W(x, \tilde x)) = a^*(x, \tilde x).
\]

Thus, \( a_r \) maximizes the integral, and
\[
\mathrm{AUC}(s) \leq \frac{1}{2} + \frac{1}{4} \iint |W(x, \tilde x)| \, dx d\tilde x = \mathrm{AUC}(r).
\]

Finally, equality occurs if and only if \( a_s(x, \tilde x) = a_r(x, \tilde x) \) for almost all \( (x, \tilde x) \), which implies that \( s(x) > s(\tilde x) \iff r(x) > r(\tilde x) \), i.e., \( s = h(r) \) for some strictly increasing function \( h \) almost everywhere.
\end{proof}

\begin{corollary}[Lower Bound on AUC via Total Variation] \label{cor:lower-bound-auc}
Under the setup of the previous lemma, let \( \mathrm{TV}(p, q) := \frac{1}{2} \int |p(x) - q(x)| dx \) denote the total variation distance between \( p \) and \( q \). Then
\[
\mathrm{AUC}(r) \geq \frac{1 + \mathrm{TV}(p, q)}{2}.
\]
\end{corollary}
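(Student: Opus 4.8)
The plan is to reuse the identity already extracted in the proof of Lemma~\ref{lem:optimal-score-auc}, namely
\[
\mathrm{AUC}(r) = \frac{1}{2} + \frac{1}{4} \iint |W(x,\tilde x)| \, dx\, d\tilde x, \qquad W(x,\tilde x) := p(x) q(\tilde x) - p(\tilde x) q(x),
\]
so that it suffices to show the lower bound $\iint |W(x,\tilde x)| \, dx\, d\tilde x \ge 2\,\mathrm{TV}(p,q)$. The idea is to pair $W$, which is antisymmetric, against a conveniently chosen bounded antisymmetric test function and use $|W| \ge W g$ whenever $|g| \le 1$ pointwise.

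First I would introduce the Scheffé set $A := \{x \in \mathcal{X} : p(x) > q(x)\}$ and define $g(x,\tilde x) := \mathbb{I}\{x \in A\} - \mathbb{I}\{\tilde x \in A\}$, which is antisymmetric and takes values in $\{-1,0,1\}$, hence $|g| \le 1$. Then
\[
\iint |W(x,\tilde x)| \, dx\, d\tilde x \;\ge\; \iint W(x,\tilde x)\, g(x,\tilde x)\, dx\, d\tilde x.
\]
Next I would expand the right-hand side into its two terms and use the antisymmetry of $W$ together with the change of variables $(x,\tilde x) \mapsto (\tilde x, x)$ to see that $\iint W(x,\tilde x)\, \mathbb{I}\{\tilde x \in A\}\, dx\, d\tilde x = -\iint W(x,\tilde x)\, \mathbb{I}\{x \in A\}\, dx\, d\tilde x$, so the cross term doubles: $\iint W g = 2 \iint W(x,\tilde x)\, \mathbb{I}\{x \in A\}\, dx\, d\tilde x$.

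Then I would evaluate this remaining integral directly by Fubini, using that $p$ and $q$ are probability densities: $\iint [p(x)q(\tilde x) - p(\tilde x)q(x)]\,\mathbb{I}\{x\in A\}\, dx\, d\tilde x = P(A) - Q(A)$, where $P(A) = \int_A p$ and $Q(A) = \int_A q$. Since $A$ is exactly the set on which $p$ exceeds $q$, the standard Scheffé identity gives $P(A) - Q(A) = \int_A (p-q) = \tfrac12 \int |p - q| = \mathrm{TV}(p,q)$. Combining, $\iint |W| \ge 2\,\mathrm{TV}(p,q)$, and plugging into the AUC identity yields $\mathrm{AUC}(r) \ge \tfrac12 + \tfrac14 \cdot 2\,\mathrm{TV}(p,q) = \tfrac{1+\mathrm{TV}(p,q)}{2}$.

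I do not anticipate a serious obstacle here; the only non-routine step is recognizing that the double integral of $|W|$ should be lower-bounded by pairing against an antisymmetric $\{-1,0,1\}$-valued function built from the Scheffé set, rather than attempting to manipulate $|p(x)q(\tilde x) - p(\tilde x)q(x)|$ directly. One minor point to be careful about is measurability and applicability of Fubini, which is immediate since all integrands are products of densities times indicators and hence integrable; the hypothesis $p,q>0$ from the lemma statement is not even needed for this bound.
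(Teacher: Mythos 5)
Your proof is correct and is essentially the paper's argument recast in integral form: your test function $g(x,\tilde x)=\mathbb{I}\{x\in A\}-\mathbb{I}\{\tilde x\in A\}$ is exactly the antisymmetric part $a_{\phi^*}$ of the thresholded Bayes classifier $\phi^*=\mathbb{I}[r>1]$ that the paper scores, so your pointwise bound $\iint W g \le \iint |W|$ is the paper's $\mathrm{AUC}(\phi^*)\le\mathrm{AUC}(r)$, and your evaluation $\iint W g = 2\bigl(P(A)-Q(A)\bigr) = 2\,\mathrm{TV}(p,q)$ is the paper's computation that $\mathrm{AUC}(\phi^*)=\tfrac{1}{2}+\tfrac{1}{2}\mathrm{TV}(p,q)$. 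No gaps.
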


\begin{proof}
Consider the binary Bayes classifier
\[
\phi^*(x) := \mathbb{I}[r(x) > 1] = \mathbb{I}\left[\frac{p(x)}{q(x)} > 1\right],
\]
which is known to be the most powerful test at level \( \alpha = \frac{1}{2} \). Its classification accuracy is:
\[
\mathbb{P}[\phi^*(X) = 1] \cdot \frac{1}{2} + \mathbb{P}[\phi^*(\tilde X) = 0] \cdot \frac{1}{2} = \frac{1}{2} + \frac{1}{2} \mathrm{TV}(p, q).
\]

Now note that if we treat \( \phi^* \in \{0, 1\} \) as a scoring function, the AUC of this classifier is:
\[
\mathrm{AUC}(\phi^*) = \mathbb{P}[\phi^*(X) > \phi^*(\tilde X)] + \frac{1}{2} \mathbb{P}[\phi^*(X) = \phi^*(\tilde X)] \leq \mathrm{AUC}(r),
\]
since \( \phi^* \) is a thresholding of \( r \), and AUC is maximized by ranking with \( r \).

But:
\[
\mathrm{AUC}(\phi^*) = \frac{1}{2} + \frac{1}{2} \mathrm{TV}(p, q),
\]
so we conclude:
\[
\mathrm{AUC}(r) \geq \frac{1}{2} + \frac{1}{2} \mathrm{TV}(p, q) = \frac{1 + \mathrm{TV}(p, q)}{2}.
\]
\end{proof}
\begingroup
\renewcommand{\thelemma}{\ref{lem:auc-expec-alt}}
\begin{lemma} [Expected conformal p-value under the alternative]
Let \(p\) and \(q\) be as defined above, and let \(U\) denote the conformal p-value computed using the oracle density ratio \(r(x) = p(x)/q(x)\) as the score function, as defined in \eqref{eq:conformal-pval}, with \(m\) calibration samples drawn from \(p\) for each test point drawn from \(q\). Then, under the alternative hypothesis \(H_1: p \ne q\), we have
\[
\mathbb{E}[U] = 1 - \mathrm{AUC}(r) \leq \frac{1}{2} - \frac{1}{2} \mathrm{TV}(p, q) < \frac{1}{2},
\]
in the limit as \(m \to \infty\).
\end{lemma}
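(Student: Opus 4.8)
The plan is to pass to the $m \to \infty$ limit, identify the limiting expectation with a pairwise-ranking probability, recognize that probability as $1 - \mathrm{AUC}(r)$, and then invoke the total-variation lower bound from Corollary~\ref{cor:lower-bound-auc}. All the substantive work — the optimality of ranking by $r$ and the $\mathrm{TV}$ bound — has already been carried out in Lemma~\ref{lem:optimal-score-auc} and Corollary~\ref{cor:lower-bound-auc}, so this proof is essentially a bookkeeping argument around those facts.

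First I would record the almost-sure limit of $U$ already noted in the main text. Conditioning on the test point $\tilde X$, the numerator term $\frac{1}{m+1}\sum_{i=1}^{m+1}\mathbb{I}\{S_i < S_{m+1}\}$ equals $\frac{1}{m+1}\sum_{i=1}^{m}\mathbb{I}\{r(X_i) < r(\tilde X)\}$ since the $i=m+1$ summand vanishes, and by the strong law of large numbers this converges a.s. to $\mathbb{P}_{X\sim p}[r(X) < r(\tilde X)]$; likewise the tie term $\frac{1}{m+1}(1 + \sum_{i=1}^m \mathbb{I}\{r(X_i)=r(\tilde X)\})$ converges a.s. to $\mathbb{P}_{X\sim p}[r(X)=r(\tilde X)]$. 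Hence, conditionally on $\tilde X$ (and on the independent $\xi \sim \mathrm{Unif}[0,1]$),
\[
U \;\xrightarrow[m\to\infty]{\text{a.s.}}\; \mathbb{P}_{X\sim p}[r(X)<r(\tilde X)] + \xi\,\mathbb{P}_{X\sim p}[r(X)=r(\tilde X)].
\]
Since $0 \le U \le 1$ for every $m$, bounded convergence lets me interchange limit and expectation, so $\lim_{m\to\infty}\mathbb{E}[U]$ equals the expectation of this limit.

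Next I would take expectations over $\tilde X \sim q$ and the independent $\xi$, using $\mathbb{E}[\xi]=\tfrac12$ and Fubini, to obtain
\[
\lim_{m\to\infty}\mathbb{E}[U] = \mathbb{P}_{X\sim p,\,\tilde X\sim q}[r(X)<r(\tilde X)] + \tfrac12\,\mathbb{P}_{X\sim p,\,\tilde X\sim q}[r(X)=r(\tilde X)].
\]
Then I would apply the partition identity $\mathbb{P}[r(X)<r(\tilde X)] + \mathbb{P}[r(X)=r(\tilde X)] + \mathbb{P}[r(X)>r(\tilde X)] = 1$ to rewrite the right-hand side as $1 - \big(\mathbb{P}[r(X)>r(\tilde X)] + \tfrac12\mathbb{P}[r(X)=r(\tilde X)]\big) = 1 - \mathrm{AUC}(r)$, which is the claimed equality. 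Finally, Corollary~\ref{cor:lower-bound-auc} gives $\mathrm{AUC}(r) \ge \tfrac{1+\mathrm{TV}(p,q)}{2}$, hence $\mathbb{E}[U] = 1-\mathrm{AUC}(r) \le \tfrac12 - \tfrac12\mathrm{TV}(p,q)$; and under $H_1$ we have $p \neq q$, so $\mathrm{TV}(p,q) > 0$ and the bound is strictly below $\tfrac12$.

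There is no deep obstacle here; the only points needing care are (i) justifying the limit–expectation interchange, which is immediate from boundedness of $U$, and (ii) being precise that the statement concerns the limiting value $\lim_{m\to\infty}\mathbb{E}[U]$, equivalently the expectation of the a.s. limit of $U$ — for a fixed finite $m$ one only has $\mathbb{E}[U]=\tfrac12$ under $H_0$ (Lemma~\ref{lem:uniformity}) and an approximate version of the AUC identity under $H_1$.
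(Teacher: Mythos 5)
Your proof is correct and follows essentially the same route as the paper's: pass to the $m\to\infty$ limit via the strong law of large numbers, interchange limit and expectation using boundedness of $U$, identify the limiting expectation as $1-\mathrm{AUC}(r)$, and invoke Corollary~\ref{cor:lower-bound-auc} for the total-variation bound. Your explicit use of the partition identity to convert the "$<$" probability into $1-\mathrm{AUC}(r)$ and your closing remark on interpreting the identity as a statement about the limit are slightly more careful than the paper's write-up, but the argument is the same.
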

\addtocounter{lemma}{-1}
\endgroup
\begin{proof}
    We formalize our discussion in Section \ref{sec:uniform-power-alternative}. First, by the strong law of large numbers, we have
    \[U \xrightarrow[m\to\infty]{\text{a.s.}} \Pp{X \sim p}{r(X) < r(\tilde{X})} + \xi\, \Pp{X \sim p}{r(X) = r(\tilde{X})}, \quad \text{where } \xi \sim \p{Unif}{0,1}.\]
    Next, we take expectation with respect to $\tX \sim q$ and $\xi \sim \p{Unif}{0,1}$. However, since $U\leq 1$, Dominated Convergence Theorem gives that
    \begin{align*}
        \E{U} \xrightarrow[m\to\infty]{\text{a.s.}}&\; \P{r(X) < r(\tilde{X})} + \frac{1}{2}\, \P{r(X) = r(\tilde{X})}\\
        =& 1-\text{AUC}(r)
    \end{align*}
    The result follows immediately from Corollary \ref{cor:lower-bound-auc}, since $0<\text{TV}(p,q)\leq 1$ under $H_1: p\neq q$.
\end{proof}
We note that \cite{hu2024} present a related result from an information-theoretic perspective, showing that the variability of the density ratio \( r(x) \) under \( q \) controls the deviation of conformal p-values from uniformity. In contrast, our focus is on quantifying the relationship between a classifier's discriminative ability and the statistical power of the resulting two-sample test. While the underlying intuition is similar, our formulation offers a more direct and operational perspective, grounded in the ranking statistic captured by the AUC score. For completeness, we restate their relevant lemma below.

\begin{lemma}[Restated from \cite{hu2024}] \label{lem:hu-lei}
Under the alternative \(H_1: p \neq q\), we have 
\[
\mathbb{E}[U] = \frac{1}{2} - \frac{1}{4} \, \Ee{X, X' \sim q}{\abs{r(X) - r(X')}} < \frac{1}{2} \quad \text{as } m \to \infty,
\]
where \(X, X'\) are i.i.d. draws from \(q\).
\end{lemma}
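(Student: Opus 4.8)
The plan is to reduce the statement to the AUC identity already established for Lemma~\ref{lem:optimal-score-auc}, together with the limiting expression for $\mathbb{E}[U]$ derived in the proof of Lemma~\ref{lem:auc-expec-alt}. Recall from that proof that when the oracle ratio $r(x)=p(x)/q(x)$ is used as the score and $m$ calibration points are drawn from $p$, the strong law of large numbers plus dominated convergence (valid since $0\le U\le 1$) give $\mathbb{E}[U]\to 1-\mathrm{AUC}(r)$ as $m\to\infty$. So it suffices to prove the purely analytic identity $1-\mathrm{AUC}(r)=\tfrac12-\tfrac14\,\mathbb{E}_{X,X'\sim q}\lvert r(X)-r(X')\rvert$, equivalently $\mathrm{AUC}(r)=\tfrac12+\tfrac14\,\mathbb{E}_{X,X'\sim q}\lvert r(X)-r(X')\rvert$.

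For this I would recycle the computation inside the proof of Lemma~\ref{lem:optimal-score-auc}, which shows
\[
\mathrm{AUC}(r)=\frac12+\frac14\iint \lvert W(x,\tilde x)\rvert\,dx\,d\tilde x,\qquad W(x,\tilde x):=p(x)q(\tilde x)-p(\tilde x)q(x).
\]
The one new step is to factor $W$ through the density ratio: since $p=r\,q$ pointwise, $W(x,\tilde x)=q(x)q(\tilde x)\bigl(r(x)-r(\tilde x)\bigr)$, hence
\[
\iint \lvert W(x,\tilde x)\rvert\,dx\,d\tilde x=\iint \lvert r(x)-r(\tilde x)\rvert\,q(x)q(\tilde x)\,dx\,d\tilde x=\mathbb{E}_{X,X'\sim q}\lvert r(X)-r(X')\rvert.
\]
Substituting back gives $\mathrm{AUC}(r)=\tfrac12+\tfrac14\,\mathbb{E}_{X,X'\sim q}\lvert r(X)-r(X')\rvert$, and combining with $\mathbb{E}[U]\to 1-\mathrm{AUC}(r)$ yields the claimed equality. (An equivalent, self-contained alternative is to compute directly: in the limit $\mathbb{E}[U]=\mathbb{P}_{X\sim p,\tilde X\sim q}[r(X)<r(\tilde X)]+\tfrac12\mathbb{P}[r(X)=r(\tilde X)]$; rewriting the $p$-expectation as a $q$-expectation weighted by $r$ and symmetrizing in the two $q$-draws reduces this to $\tfrac12\,\mathbb{E}_{X,X'\sim q}[\min(r(X),r(X'))]$, which equals $\tfrac12-\tfrac14\,\mathbb{E}_{X,X'\sim q}\lvert r(X)-r(X')\rvert$ using $\min(a,b)=\tfrac{a+b}{2}-\tfrac{\lvert a-b\rvert}{2}$ and $\mathbb{E}_{X\sim q}[r(X)]=1$. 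I would present the first route since it directly reuses machinery already in the paper.)

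For the strict inequality $\mathbb{E}[U]<\tfrac12$ under $H_1$, it is enough that $\mathbb{E}_{X,X'\sim q}\lvert r(X)-r(X')\rvert>0$. If this expectation vanished, then $r$ would be $q$-almost-surely equal to a constant $c$; integrating against $q$ and using $\int r\,q=\int p=1$ (the standing assumption is $p,q>0$ with a common base measure, as in the full statement of Lemma~\ref{lem:optimal-score-auc}) forces $c=1$, hence $p=q$, contradicting the alternative.

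I do not expect a genuine obstacle here; the proof is essentially bookkeeping once the $W\mapsto q\otimes q\cdot(r(x)-r(\tilde x))$ factorization is spotted. The only points needing care are the interchange of limit and expectation in the $m\to\infty$ step — handled by dominated convergence exactly as in Lemma~\ref{lem:auc-expec-alt} — and the measure-theoretic detail that $p$ and $q$ share support, which is built into the paper's absolute-continuity assumptions.
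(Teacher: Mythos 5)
Your proof is correct. Note, however, that the paper itself does not prove this lemma at all: it is restated from \citet{hu2024} without an argument, so there is no in-paper proof to compare against. What you have supplied is a self-contained derivation that, pleasingly, ties the \citet{hu2024} identity directly to the paper's own machinery: starting from $\mathbb{E}[U]\to 1-\mathrm{AUC}(r)$ (the proof of Lemma~\ref{lem:auc-expec-alt}) and the representation $\mathrm{AUC}(r)=\tfrac12+\tfrac14\iint|W|$ from the proof of Lemma~\ref{lem:optimal-score-auc}, the single factorization $W(x,\tilde x)=q(x)q(\tilde x)\bigl(r(x)-r(\tilde x)\bigr)$ converts $\iint|W|$ into $\mathbb{E}_{X,X'\sim q}\lvert r(X)-r(X')\rvert$, which is exactly the bridge needed. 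This makes explicit that the paper's AUC-based formula and the information-theoretic formula of \citet{hu2024} are the same identity in two guises, and it also shows that Corollary~\ref{cor:lower-bound-auc} is equivalent to the bound $\tfrac12\mathbb{E}_{X,X'\sim q}|r(X)-r(X')|\ge \mathrm{TV}(p,q)$. Your alternative route via $\tfrac12\mathbb{E}[\min(r(X),r(X'))]$ and $\min(a,b)=\tfrac{a+b}{2}-\tfrac{|a-b|}{2}$ is also sound and closer in spirit to the importance-reweighting step used in the paper's proof of Theorem~\ref{thm:robustness-unif}. The strict-inequality argument (constancy of $r$ on the common support forces $c=1$ and hence $p=q$) is fine given the positivity assumptions in the full statement of Lemma~\ref{lem:optimal-score-auc}.
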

\subsection{Proof of Theorem \ref{thm:robustness-unif}} \label{app:robustness-estimation-uniform}

\begin{proof}
We start by taking the expectation of our conformal p-value wrt the tie breaking uniform variable $\xi$:
\[
2\,\mathbb{E}_{\xi}[\hat{U}] = \frac{1}{m+1} \left( \sum_{i=1}^m \mathbb{I}(\hat{r}(X_i) < \hat{r}(\tilde{X})) + \sum_{i=1}^m \mathbb{I}(\hat{r}(X_i) \le \hat{r}(\tilde{X})) + 1 \right).
\]
By the Strong Law of Large Numbers (SLLN), as \(m \to \infty\), we have:
\[
2\,\mathbb{E}_{\xi}[\hat{U}] \to \mathbb{E}[\mathbb{I}(\hat{r}(X) < \hat{r}(\tilde{X})) \mid \tilde{X}] + \mathbb{E}[\mathbb{I}(\hat{r}(X) \le \hat{r}(\tilde{X})) \mid \tilde{X}].
\]
Taking expectation over \(\tilde{X}\) and applying the Dominated Convergence Theorem (since $\hat U \leq 1$):
\[
2\,\mathbb{E}[\hat{U}] \to \mathbb{E}[\mathbb{I}(\hat{r}(X) < \hat{r}(\tilde{X}))] + \mathbb{E}[\mathbb{I}(\hat{r}(X) \le \hat{r}(\tilde{X}))].
\]
Now define \(\tilde{X}, \tilde{X}' \overset{\text{iid}}{\sim} q\). Using importance reweighting, we write:
\[
2\,\mathbb{E}[\hat{U}] = \mathbb{E}\left[r(\tilde{X}') \cdot \mathbb{I}(\hat{r}(\tilde{X}') < \hat{r}(\tilde{X}))\right] + \mathbb{E}\left[r(\tilde{X}') \cdot \mathbb{I}(\hat{r}(\tilde{X}') \le \hat{r}(\tilde{X}))\right].
\]
Since \(\mathbb{E}[r(\tilde{X}')] = 1\), this becomes:
\[
2\,\mathbb{E}[\hat{U}] = 1 - \mathbb{E}[(r(\tilde{X}') - r(\tilde{X})) \cdot \mathbb{I}(\hat{r}(\tilde{X}') > \hat{r}(\tilde{X}))].
\]
Define:
\[
Z := r(\tilde{X}') - r(\tilde{X}), \quad \hat{Z} := \hat{r}(\tilde{X}') - \hat{r}(\tilde{X}), \quad \Delta := \hat{Z} - Z.
\]
Then:
\[
\mathbb{E}[\hat{U}] = \E{U} + \frac{1}{2} \delta, \quad \text{where} \quad \delta := \frac{1}{2} \mathbb{E}[|Z|] - \mathbb{E}[Z \cdot \mathbb{I}(\hat{Z} > 0)].
\]
By symmetry of \(Z\),
\[
\delta = \frac{1}{2} \mathbb{E}[|Z|] - \mathbb{E}[Z \cdot \mathbb{I}(\hat{Z} > 0)] = \mathbb{E}[Z (\mathbb{I}(Z > 0) - \mathbb{I}(\hat{Z} > 0))].
\]
Define events:
\[
A := \{Z > 0, \hat{Z} \le 0\}, \quad B := \{Z \le 0, \hat{Z} > 0\}.
\]
Then:
\[
\delta = \mathbb{E}[Z \cdot \mathbb{I}_A] - \mathbb{E}[Z \cdot \mathbb{I}_B] \le \mathbb{E}[|Z| \cdot \mathbb{I}_{|\Delta| \ge |Z|}].
\]
For any threshold \(t > 0\), we decompose:
\[
\delta \le \mathbb{E}[|Z| \cdot \mathbb{I}(|Z| \le t)] + \mathbb{E}[|Z| \cdot \mathbb{I}(|\Delta| > t)].
\]
The second term is bounded by Cauchy--Schwarz and Markov:
\[
\mathbb{E}[|Z| \cdot \mathbb{I}(|\Delta| > t)] \le \sqrt{\mathbb{E}[Z^2]} \cdot \sqrt{\mathbb{P}(|\Delta| > t)} \le \sqrt{\mathbb{E}[Z^2]} \cdot \frac{2 \varepsilon}{t}.
\]
Assuming the density \(f_Z\) of \(Z\) is bounded near zero by \(C\), we have:
\[
\mathbb{E}[|Z| \cdot \mathbb{I}(|Z| \le t)] \le 2C \int_0^t z \, dz = Ct^2.
\]
Combining,
\[
\delta \le Ct^2 + \frac{2\sqrt{\mathbb{E}[Z^2]}\, \varepsilon}{t}.
\]
Minimizing the RHS by choosing \(t = \left( \frac{2\sqrt{\mathbb{E}[Z^2]}\, \varepsilon}{C} \right)^{1/3}\), we obtain:
\[
\delta = O(\varepsilon^{2/3}).
\]
\end{proof}

\subsection{Variance scaling of conformal p-value} \label{app:var-scaling}
In this section we assume that the score function $s$ gives continuous outputs, so that the probability of ties is zero. Note that the assumption is ubiquitously satisfied in practical settings when classifiers are trained using neural networks.
\begin{lemma}[Variance of Conformal p-value]\label{lem:var-conformal-p-val}
Let $\hat{U}$ be the conformal p-value for a test point $\tilde X \sim q$. The variance of $\hat{U}$, conditioned on the test point, is:
$$
\text{Var}(\hat{U} | \tilde X) = \frac{mU(1-U)}{(m+1)^2} = \mathcal{O}(1/m)
$$
where $U = P(s(X) \le s(\tilde X) | X \sim p)$ is the true, population-level p-value (assuming no ties).
\end{lemma}
\begin{proof}
By definition, the conformal p-value (assuming no ties for simplicity) is given by:
$$
\hat{U} = \frac{1}{m+1} \left( 1 + \sum_{i=1}^{m} \mathbb{I}\{s(X_i) \le s(\tilde X)\} \right)
$$
where $\{X_i\}_{i=1}^m$ are i.i.d. calibration samples from the distribution $p$.

When we condition on the test point $\tilde X$, the score $s(\tilde X)$ becomes a fixed value. Let's define a set of indicator random variables $B_i$ for $i=1, \dots, m$:
$$
B_i = \mathbb{I}\{s(X_i) \le s(\tilde X)\}
$$
Since the calibration samples $X_i$ are drawn i.i.d. from $p$, the variables $B_i$ are i.i.d. Bernoulli random variables. The probability of success for each $B_i$ is:
$$
P(B_i = 1) = P(s(X_i) \le s(\tilde X)) = U
$$
where $U$ is the population-level p-value as defined in the lemma statement. Thus, $B_i \sim \text{Bernoulli}(U)$.

We can now express $\hat{U}$ in terms of these Bernoulli variables:
$$
\hat{U} = \frac{1}{m+1} \left( 1 + \sum_{i=1}^{m} B_i \right)
$$
The variance of $\hat{U}$ conditioned on $\tilde X$ is:
\begin{align*}
\text{Var}(\hat{U} | \tilde X) &= \text{Var}\left( \frac{1}{m+1} \left( 1 + \sum_{i=1}^{m} B_i \right) \right) \\
&= \left( \frac{1}{m+1} \right)^2 \text{Var}\left( 1 + \sum_{i=1}^{m} B_i \right) \\
&= \frac{1}{(m+1)^2} \text{Var}\left( \sum_{i=1}^{m} B_i \right)
\end{align*}
Since the $B_i$ are i.i.d., the variance of their sum is the sum of their variances:
$$
\text{Var}\left( \sum_{i=1}^{m} B_i \right) = \sum_{i=1}^{m} \text{Var}(B_i)
$$
The variance of a $\text{Bernoulli}(U)$ random variable is $U(1-U)$. Therefore:
$$
\sum_{i=1}^{m} \text{Var}(B_i) = \sum_{i=1}^{m} U(1-U) = mU(1-U)
$$
Substituting this back, we get the final result:
$$
\text{Var}(\hat{U} | \tilde X) = \frac{mU(1-U)}{(m+1)^2}
$$
As the number of calibration samples $m \to \infty$, the variance behaves as:
$$
\frac{mU(1-U)}{(m+1)^2} \approx \frac{mU(1-U)}{m^2} = \frac{U(1-U)}{m} = \mathcal{O}(1/m)
$$
This completes the proof.
\end{proof}
Hence, the resulting $\cO(1/m)$ scaling of the variance of conformal p-value ensures that the test power quickly rises as the p-values converge quickly to their true non-uniform values under the alternative when one increases the number of calibration points $m$.

\subsection{Multiple test analysis} \label{app:multiple-test-theory}
In this section, we summarize the multiple conformal testing procedure proposed by \cite{hu2024}, which accounts for the dependence among p-values arising from the use of a shared calibration set. Let \(X_1,\dots,X_{n_p}\stackrel{\text{i.i.d.}}{\sim}p\) (calibration) and \(\tilde X_1,\dots,\tilde X_{n_q}\stackrel{\text{i.i.d.}}{\sim}q\) (test), independent.
Let \(r(x)=p(x)/q(x)\). Under the NPE setting, where the marginal distribution of 
$y$ is assumed to be the same under both the true and approximate joint distributions (or under the unconditional setting), we derive a simplified form of their test statistic:
\begin{align*}
    \hat T = \frac{\frac{1}{2} - \frac{1}{n_q}\sum_{j=1}^{n_q}\hat U_j}{\hat \sigma/\sqrt{n_p}}
\end{align*}
where $\hat U_j := \frac{1}{n_p} \left( \sum_{i=1}^{n_p} \bI \left\{ \hat r(X_i) < \hat r(\tilde X_j) \right\} + \xi_j \cdot \sum_{i=1}^{n_p} \bI \left\{ \hat r(X_i) = \hat r(\tilde{X}_j) \right\} \right)$ is obtained from the entire calibration set (common to all test points) with $\hat r(\cdot)$ as the scoring function, and $\hat \sigma$ is the asymptotic estimated standard deviation of $\frac{\sqrt{n_p}}{n_q}\sum_{j=1}^{n_q}\hat U_j$. We also adapt  their expression for the variance estimate to the same-marginal setting. Let $\hat F$ be the empirical CDF of $\bc{\hat r(\tX_j): j\in[n_q]}$ and $\hat F_-$ be its left limit. Then the variance estimate is given by:
\begin{align*}
    \hat \sigma^2 = \hat\sigma_1^2 + \frac{n_p}{12\cdot n_q}
\end{align*}
where $\hat\sigma_1^2$ is the empirical variance of $\bc{\hat F_{1/2}(\hat r(X_i)): i \in [n_p]}$ and $\hat F_{1/2}(t) = \br{\hat F(t) + \hat F_-(t)}/2.$
\subsubsection{Framing as a U-statistic and CLT}
Let \(s:\mathcal X\to\RR\) be a \emph{fixed} score (either \(r\) or \(\hat r\)) independent of the evaluation samples.
\paragraph{Average of $p$-values is a U-statistic}
Let \(\{\xi_{j}\}_{1\le j\le n_q}\stackrel{\text{i.i.d.}}{\sim}\mathrm{Unif}(0,1)\), independent of all data.
Define the randomized kernel and its mid-rank expectation:
\begin{align}
h_s(x,y;\xi)&:=\bI\{s(x)<s(y)\}+\xi\,\bI\{s(x)=s(y)\}, \nonumber\\
\bar h_s(x,y)&:=\EE_\xi\big[h_s(x,y;\xi)\big]=\bI\{s(x)<s(y)\}+\tfrac12\bI\{s(x)=s(y)\}. \nonumber
\end{align}

For a test point \(\tilde X_j\), its conformal \(p\)-value (with shared calibration and per-pair tie-breaking) is
\begin{equation}
\hat U_j(s)=\frac{1}{n_p}\sum_{i=1}^{n_p} h_s(X_i,\tilde X_j;\xi_{j}).\nonumber
\end{equation}
The average is
\begin{equation}
\bar U^{(\xi)}_s=\frac{1}{n_q}\sum_{j=1}^{n_q} \hat U_j(s)=\frac{1}{n_p n_q}\sum_{i=1}^{n_p}\sum_{j=1}^{n_q}h_s(X_i,\tilde X_j;\xi_{j}). \nonumber
\end{equation}
Define the mid-rank average \(\bar U_s:=(n_p n_q)^{-1}\sum_{i,j}\bar h_s(X_i,\tilde X_j)\).

Let \(\psi_s:=\EE[\bar h_s(X,\tilde X)]=1-\text{AUC}(s)\) and \(\sigma_s^2\) be the asymptotic variance in the CLT for \(\sqrt{n_p}(\bar U_s-\psi_s)\) (see Proposition~\ref{prop:clt-mid}).
The multiple statistic is
\begin{equation}\label{eq:That}
\hat T(s):=\frac{\tfrac12-\bar U^{(\xi)}_s}{\hat\sigma(s)/\sqrt{n_p}},\nonumber
\end{equation}
with \(\hat\sigma(s)\xrightarrow{p}\sigma_s\).

First, we use the standard Hoeffding decomposition to derive a CLT for the mid-rank average U-statistic $\bar U_s$.

\begin{lemma}[Hoeffding decomposition]
\label{lem:hoeffding}
Define, for fixed \(s\),
\begin{align*}
\phi_{p,s}(x)&:=\EE[\bar h_s(x,\tilde X)]-\psi_s,\\
\phi_{q,s}(y)&:=\EE[\bar h_s(X,y)]-\psi_s,\\
\phi_{0,s}(x,y)&:=\bar h_s(x,y)-\psi_s-\phi_{p,s}(x)-\phi_{q,s}(y).
\end{align*}
Then,
\begin{align}
&\bar U_s-\psi_s
=\frac{1}{n_p}\sum_{i=1}^{n_p}\phi_{p,s}(X_i)+\frac{1}{n_q}\sum_{j=1}^{n_q}\phi_{q,s}(\tilde X_j)+R_{n_p,n_q}(s),\label{eq:hoeff}\\
&\EE[\phi_{p,s}(X)]=\EE[\phi_{q,s}(\tilde X)]=\EE[\phi_{0,s}(X,\tilde X)]=0, \nonumber\\
&\EE[\phi_{0,s}(X,\tilde X)\mid X]=\EE[\phi_{0,s}(X,\tilde X)\mid \tilde X]=0.\label{eq:proj-centering}
\end{align}
Moreover,
\begin{align}
R_{n_p,n_q}(s)
&:=\frac{1}{n_p n_q}\sum_{i=1}^{n_p}\sum_{j=1}^{n_q}\phi_{0,s}(X_i,\tilde X_j) \nonumber\\
\var(R_{n_p,n_q}(s))\ &\le\ \frac{C_0}{n_p n_q}\qquad\text{for a universal constant }C_0<\infty.\label{eq:R-var}
\end{align}
\end{lemma}

\begin{proof}
Equation \eqref{eq:hoeff} is the standard two-sample Hoeffding decomposition obtained by adding and subtracting \(\EE[\bar h_s(X,\tilde X)\mid X]\) and \(\EE[\bar h_s(X,\tilde X)\mid \tilde X]\) termwise and regrouping. The centering identities in \eqref{eq:proj-centering} follow directly from the definitions. For \eqref{eq:R-var}, boundedness \(0\le \bar h_s\le 1\) implies \(|\phi_{0,s}|\le 2\), and standard variance bounds for degenerate two-sample U-statistics (or a direct Efron–Stein/Poincaré argument) yield \(\var(R_{n_p,n_q}(s))\le C_0/(n_p n_q)\).
\end{proof}
\begin{proposition}[CLT with explicit variance]\label{prop:clt-mid}
Let \(\sigma_{p,s}^2:=\var(\phi_{p,s}(X))\), \(\sigma_{q,s}^2:=\var(\phi_{q,s}(\tilde X))\). Then
\begin{equation}
\sqrt{n_p}(\bar U_s-\psi_s)\Rightarrow N\!\left(0,\ \sigma_s^2\right),\qquad 
\sigma_s^2=\sigma_{p,s}^2+\frac{n_p}{n_q}\sigma_{q,s}^2.  \nonumber
\end{equation}
Moreover, \(\var(R_{n_p,n_q}(s))\le C/(n_p n_q)\) for a universal \(C\), so \(\sqrt{n_p}R_{n_p,n_q}(s)\to 0\) in \(L_2\).
\end{proposition}
\begin{proof}
Multiply \eqref{eq:hoeff} by \(\sqrt{n_p}\), yielding
\[
\sqrt{n_p}(\bar U_s-\psi_s)=\frac{1}{\sqrt{n_p}}\sum_{i=1}^{n_p}\phi_{p,s}(X_i)
+\sqrt{\frac{n_p}{n_q}}\cdot\frac{1}{\sqrt{n_q}}\sum_{j=1}^{n_q}\phi_{q,s}(\tilde X_j)
+\sqrt{n_p}\,R_{n_p,n_q}(s).
\]
By the Lindeberg CLT for i.i.d.\ sums, the two normalized sums converge jointly to independent Gaussians with variances \(\sigma_{p,s}^2\) and \((n_p/n_q)\sigma_{q,s}^2\). Independence across the \(X\)- and \(\tilde X\)-blocks gives variance additivity. The degenerate remainder satisfies \(\EE[\big(\sqrt{n_p}R_{n_p,n_q}(s)\big)^2]\le C_0\,n_p/(n_p n_q)=C_0/n_q\to 0\) by \eqref{eq:R-var}.
\end{proof}
Now, under Assumption~\ref{ass:growth}, we argue that one can essentially ignore the ``tie-noise".
\begin{proposition}[Tie-noise: decomposition, variance, and scale]
\label{prop:tie-noise}
Let
\[
M_{n_p,n_q}(s):=\frac{1}{n_p n_q}\sum_{j=1}^{n_q}\big(\xi_{j}-\tfrac12\big)\,K_j,
\qquad
N_{\mathrm{tie}}(s):=\sum_{i=1}^{n_p}\sum_{j=1}^{n_q}\bI\{s(X_i)=s(\tilde X_j)\},
\]
where $K_j:=\sum_{i=1}^{n_p}\bI\{s(X_i)=s(\tilde X_j)\}$ and \(\{\xi_{j}\}\stackrel{\text{i.i.d.}}{\sim}\mathrm{Unif}(0,1)\) are independent of all data. Denote
\[
\bar U^{(\xi)}_s:=\frac{1}{n_p n_q}\sum_{i=1}^{n_p}\sum_{j=1}^{n_q} \Big(\bI\{s(X_i)<s(\tilde X_j)\}+\xi_{ij}\,\bI\{s(X_i)=s(\tilde X_j)\}\Big),
\quad
\bar U_s:=\EE_{\xi}\big[\bar U^{(\xi)}_s\big].
\]
Then:
\begin{enumerate}[label=(\roman*),leftmargin=1.25em]
\item \underline{Decomposition and centering.} \(\bar U^{(\xi)}_s=\bar U_s+M_{n_p,n_q}(s)\) and \(\EE\!\left[M_{n_p,n_q}(s)\mid X,\tilde X\right]=0\).
\item \underline{Scale under Assumption~\ref{ass:growth}.} If ties occur at rate \(O_p(1)\), then
\[
\sqrt{n_p}\,M_{n_p,n_q}(s)=O_p\!\Big(\frac{1}{n_q\sqrt{n_p}}\Big).
\]
\end{enumerate}
\end{proposition}

\begin{proof}
(i) Linearity in \(\xi_{j}\) gives
\[
\bar U^{(\xi)}_s-\bar U_s
=\frac{1}{n_p n_q}\sum_{i,j}\big(\xi_{j}-\tfrac12\big)\,\bI\{s(X_i)=s(\tilde X_j)\}
=M_{n_p,n_q}(s),
\]
hence the decomposition; conditional mean zero follows since \(\EE[\xi_{j}-1/2]=0\).

(ii) Conditional on \((X,\tilde X)\), the summands are independent with
\(\var(\xi_{j}-\tfrac12)=1/12\) and vanish off ties, so
\[
\EE\!\left[M_{n_p,n_q}(s)^2\mid X,\tilde X\right]
=\var\!\left(M_{n_p,n_q}(s)\mid X,\tilde X\right)
=\frac{1}{(n_pn_q)^2}\sum_{j=1}^{n_q}\frac{1}{12}\,K_j^2
\leq \frac{1}{12}\cdot\frac{N^2_{\mathrm{tie}}(s)}{(n_p n_q)^2}.
\]
Taking expectations and using \(N_{\mathrm{tie}}(s) = O_p(1)\) (under Assumption~\ref{ass:growth}), Chebyshev's inequality yields the desired result.
\end{proof}
The following lemma now decomposes our multiple test statistic into three terms which we analyze later:
\begin{lemma}[Expansion of \(\hat T(s)\)]
For any fixed \(s\),
\begin{equation}
\label{eq:T-hat-explicit}
\hat T(s)
=\frac{\sqrt{n_p}}{\sigma_s}\,\Big(\tfrac12-\psi_s\Big)
-\frac{\sqrt{n_p}}{\sigma_s}\,(\bar U_s-\psi_s)
-\underbrace{\frac{\sqrt{n_p}}{\hat\sigma(s)}\,M_{n_p,n_q}(s)}_{\text{tie-noise}}
+O_p(1),
\end{equation}
where \(\frac{\sqrt{n_p}}{\sigma_s}(\bar U_s-\psi_s)\Rightarrow \mathcal N(0,1)\) and the tie-noise term is \(O_p(n_q\inv n_p^{-1/2})\).
\end{lemma}

\begin{proof}
By construction, \(\bar U^{(\xi)}_s=\bar U_s+M_{n_p,n_q}(s)\). Now write
\[
\hat T(s)=\frac{\tfrac12-\bar U^{(\xi)}_s}{\hat\sigma(s)/\sqrt{n_p}}
=\frac{\sqrt{n_p}}{\hat\sigma(s)}\Big(\tfrac12-\psi_s\Big)-\frac{\sqrt{n_p}}{\hat\sigma(s)}\Big(\bar U_s-\psi_s\Big)-\frac{\sqrt{n_p}}{\hat\sigma(s)}M_{n_p,n_q}(s).
\]
Replace \(\hat\sigma(s)\) by \(\sigma_s\) using \(\hat\sigma(s)\xrightarrow{p}\sigma_s\) and Slutsky twice, and apply Proposition \ref{prop:clt-mid} to the mid-rank fluctuation and Proposition \ref{prop:tie-noise} to tie-noise. Replacing minus signs by absorbing into terms yields \eqref{eq:T-hat-explicit}.
\end{proof}
\subsubsection{Validity under the null}
\label{sec:oracle-plugin-full}

Throughout this section we work under Assumptions \ref{ass:bounded-density-diff}, \ref{ass:growth}, and the notation introduced earlier.

We first show that for any choice of the score function $s$ satisfying our assumptions, the test is asymptotically valid, i.e., $\hat T(s)$ has a normal distribution under the null. 

\begin{theorem}[Null normality]\label{thm:That}
Under \(H_0:p=q\),
\begin{equation}\label{eq:That-exp}
\hat T(s)=\frac{\sqrt{n_p}\br{\frac{1}{2}-\bar U_s}}{\sigma_s}+O_p\!\Big(\frac{1}{\sqrt{n_q}}\Big)+o_p(1).
\end{equation}
In particular, \(\hat T(s)\Rightarrow N(0,1)\).
\end{theorem}
\begin{proof}
Under \(H_0:p=q\), $\psi_s = 1-\text{AUC}(s) = 0.5$ by definition. Substitute in the expansion of \eqref{eq:T-hat-explicit} and use Proposition~\ref{prop:tie-noise} to obtain $\sqrt{n_p}\,M_{n_p,n_q}(s)=O_p\!\Big(\frac{1}{n_q\sqrt{n_p}}\Big)$.
Divide by \(\hat\sigma(s)\) and use \(\hat\sigma(s)\to\sigma_s\); combine with Proposition~\ref{prop:clt-mid}.
\end{proof}
\subsubsection{Proof of Theorem~\ref{thm:robustness-multi}}
\begin{proof}
We analyze the asymptotic behavior of the test statistic by decomposing it into a deterministic drift term, a stochastic fluctuation term, and a tie-breaking noise term. Recall the definition of the statistic for a generic score $s$ defined in \eqref{eq:That}:
\begin{equation*}
    \hat{T}(s) = \frac{1/2 - \overline{U}_s^{(\xi)}}{\hat{\sigma}(s) / \sqrt{n_p}}
\end{equation*}
where $\overline{U}_s^{(\xi)}$ is the randomized U-statistic. We decompose the numerator around its population mean $\psi_s = \mathbb{E}[\overline{U}_s] = 1 - \text{AUC}(s)$. Let $\overline{U}_s$ denote the mid-rank U-statistic (without tie-breaking noise). We write:
\begin{equation}
    \hat{T}(s) = \frac{\sqrt{n_p}}{\hat{\sigma}(s)} \left( \frac{1}{2} - \psi_s \right) - \frac{\sqrt{n_p}}{\hat{\sigma}(s)} (\overline{U}_s - \psi_s) - \frac{\sqrt{n_p}}{\hat{\sigma}(s)} (\overline{U}_s^{(\xi)} - \overline{U}_s) \nonumber
\end{equation}
Using the consistency of the variance estimator ($\hat{\sigma}(s) \xrightarrow{p} \sigma_s$) and the definitions from the U-statistic CLT (Proposition~\ref{prop:clt-mid}) and tie-noise bound (Proposition~\ref{prop:tie-noise}), this simplifies to:
\begin{equation}
    \hat{T}(s) = \frac{\sqrt{n_p}}{\sigma_s} \Delta_s - Z_s - \mathcal{O}_p(n_q^{-1/2}) + o_p(1) \nonumber
\end{equation}
where $\Delta_s = \text{AUC}(s) - 1/2$ is the drift, and $Z_s = \frac{\sqrt{n_p}}{\sigma_s} (\overline{U}_s - \psi_s)$ is the standardized fluctuation which converges weakly to $\mathcal{N}(0,1)$. Without loss of generality, assume $n_p \gtrsim n_q$ (the opposite case is symmetric):

\paragraph{Scaling of the Oracle Statistic.}
For the oracle score $r$, under the alternative, $\Delta_r > 0$. From Proposition~\ref{prop:clt-mid}, the variance scales as $\sigma_r^2 = \sigma_{p,r}^2 + \frac{n_p}{n_q} \sigma_{q,r}^2$. In the limit $n_p, n_q \to \infty$, we have $\frac{\sqrt{n_p}}{\sigma_r} \sim \frac{\sqrt{n_q}}{\sigma_{q,r}}$. Therefore:
\begin{equation}
    \hat{T}(r) = \frac{\sqrt{n_q}}{\sigma_{q,r}} \Delta_r + \mathcal{O}_p(1) = \mathcal{O}_p(\sqrt{n_q} \Delta_r) \nonumber
\end{equation}

\paragraph{Analysis of the Oracle-Estimate Gap.}
We compare the estimated statistic $\hat{T}(\hat{r})$ to the oracle $\hat{T}(r)$ by examining the difference:
\begin{equation}
    \hat{T}(\hat{r}) - \hat{T}(r) = \underbrace{\left[ \frac{\sqrt{n_p}}{\sigma_{\hat{r}}} \Delta_{\hat{r}} - \frac{\sqrt{n_p}}{\sigma_r} \Delta_r \right]}_{\text{Drift Gap}} - \underbrace{\left[ Z_{\hat{r}} - Z_r \right]}_{\text{Fluctuation Gap}} + o_p(1) \nonumber
\end{equation}

\begin{enumerate}
    \item {Drift Gap:} By Theorem~\ref{thm:robustness-unif}, the ranking error is bounded by the estimation error: $|\Delta_{\hat{r}} - \Delta_r| = |\text{AUC}(\hat{r}) - \text{AUC}(r)| \le C \varepsilon^{2/3}$. Combined with the variance scaling, the drift gap behaves as $\mathcal{O}_p(\sqrt{n_q} \varepsilon^{2/3})$.

    \item {Fluctuation Gap:} The term $Z_{\hat{r}} - Z_r$ represents the difference between two centered U-statistics scaled to unit variance. The kernel of the U-statistic, $h(x,y) = \mathbb{I}(s(x) < s(y)) + \frac{1}{2}\mathbb{I}(s(x)=s(y))$, is bounded in $[0,1]$. Consequently, the difference kernel $D(x,y) = h_r(x,y) - h_{\hat{r}}(x,y)$ is bounded by 1. The variance of the difference is bounded by the second moment of the kernel difference, which is at most $\mathcal{O}(1)$. Thus, $Z_{\hat{r}} - Z_r = \mathcal{O}_p(1)$.
\end{enumerate}

\paragraph{Relative Error.}
Combining these components, the relative error is:
\begin{equation}
    \frac{\hat{T}(\hat{r}) - \hat{T}(r)}{\hat{T}(r)} = \frac{\mathcal{O}_p(\sqrt{n_q} \varepsilon^{2/3}) + \mathcal{O}_p(1)}{\mathcal{O}_p(\sqrt{n_q} \Delta_r)} \nonumber
\end{equation}
In the asymptotic limit as $n_q \to \infty$, the fluctuation term $\mathcal{O}_p(n_q^{-1/2})$ vanishes. Assuming non-trivial power ($\Delta_r > 0$), the relative error is dominated by the estimation quality:
\begin{equation}
    \frac{\hat{T}(\hat{r}) - \hat{T}(r)}{\hat{T}(r)} = \mathcal{O}_p(\varepsilon^{2/3}) \nonumber
\end{equation}
This completes the proof.
\end{proof}

\section{Experiment Details} \label{app:exp_details}

\subsection{Benchmarking with perturbed Gaussians} \label{app:additional}
Below, we include perturbations from \cite{chen_etal_NPTBench2024} which we use as our benchmarking suite. Section \ref{sec:posterior-perturb} of the main text shows results for the first four items in the following list.

\begin{itemize}
    \item \textbf{Mean Shift.} To simulate systematic location bias, we perturb the posterior mean while keeping the covariance fixed: $q(\theta \mid y) = \mathcal{N}((1 + \gamma)\mu_y, \Sigma_y)$.  This mimics scenarios where the NPE model consistently misses the location of the true posterior mode.
    \item \textbf{Covariance Scaling.} To model over- or under-confidence in uncertainty quantification, we uniformly inflate (or deflate) the covariance matrix: $q(\theta \mid y) = \mathcal{N}(\mu_y, (1 + \gamma)\Sigma_y)$.
    This captures calibration failures where the posterior has the correct shape and center but misrepresents its overall spread.

    \item \textbf{Anisotropic Covariance Perturbation.} We introduce structured distortion in the posterior shape by injecting uncertainty along the direction of least variance: $q(\theta \mid y) = \mathcal{N}(\mu_y, \Sigma_y + \gamma \Delta)$, where \( \Delta = \mathbf{v}_{\min} \mathbf{v}_{\min}^\top \) and \( \mathbf{v}_{\min} \) is the eigenvector of \( \Sigma_y \) corresponding to its smallest eigenvalue. This subtly alters the posterior geometry.

    \item \textbf{Heavy-Tailed Perturbation.} To explore deviations in tail behavior, we replace the Gaussian with a multivariate \( t \)-distribution: $q(\theta \mid y) = t_\nu(\mu_y, \Sigma_y)$ where $\nu = 1/(\gamma + \epsilon)$.
    As \( \gamma \to 0 \), the distribution approaches Gaussian; increasing \( \gamma \) yields heavier tails, modeling posterior approximations that spuriously introduce more extreme values.

    \item \textbf{Additional Mode.} We introduce a symmetric mode in the approximate posterior with weight $\gamma$, so that \( q(\theta \mid y) = \gamma\, \cN(-\mu_y, \Sigma_y) + (1-\gamma)\, \cN(\mu_y, \Sigma_y) \) while \( p(\theta \mid y) = \cN(\mu_y, \Sigma_y) \). As \( \gamma \to 0 \), the two match, while increasing \( \gamma \) increases the mass of the spurious mode.
    
    \item \textbf{Mode Collapse.} We introduce a symmetric mode in the true posterior with weight $\gamma$, so that \( p(\theta \mid y) = \gamma\, \cN(-\mu_y, \Sigma_y) + (1-\gamma)\, \cN(\mu_y, \Sigma_y) \) while \( q(\theta \mid y) = \cN(\mu_y, \Sigma_y) \). As \( \gamma \to 0 \), the two match, while increasing \( \gamma \) increases the mass of the missed mode.
\end{itemize}
Figures~\ref{fig:mean_shift} to~\ref{fig:mode-collapse} present our experimental results, comparing our proposed methods against the baselines and including ablations on the number of calibration samples used in the conformal uniform test. Across all perturbation types, we observe consistent improvements over the classical C2ST under classifier degradation, highlighting the robustness of our method.

\subsection{Training Details}
\label{app:training}

We use a three-layer neural network classifier with skip connections and a hidden dimension of 256. The network is initialized using PyTorch’s default parameter initialization. Optimization is performed using the Adam optimizer with a cosine annealing learning rate schedule.

The training set consists of 2{,}000 samples: 1{,}000 labeled examples of the form \(\{y_i, \theta_i, 1\}\), drawn from the joint distribution \(p(\theta \mid y)p(y)\), and 1{,}000 negative examples from \(q(\theta \mid y)p(y)\). Each pair \((\theta, y)\) lies in \(\mathbb{R}^3 \times\mathbb{R}^3\). We train the model for 2{,}000 epochs using an initial learning rate of \(1 \times 10^{-5}\), which is sufficient to ensure convergence in our experiments.

For training the DC classifier, we set $m = 10$, consistent with the configuration in its \texttt{\href{https://github.com/yao-yl/DiscCalibration}{official repository}}. Increasing $m$ significantly prolongs training and makes the classifier harder to optimize effectively; we found $m = 10$ to be a practical sweet spot. The DC classifier is trained for 2000 epochs with a fixed learning rate of $1 \times 10^{-5}$

\subsection{High-Dimensional Image Experiments}
\label{subsec:image_experiments}

We investigate the behavior of conformal two-sample testing (C2ST) methods under controlled image corruptions applied to CIFAR-10 data. The experimental pipeline consists of: (i) sampling ``true'' (uncorrupted) data from the empirical distribution, (ii) generating ``fake'' data via structured corruption operators parameterized by a strength parameter $\gamma$, (iii) training a discriminative classifier between the two sources under a conditional or unconditional formulation, (iv) constructing a family of interpolated ``weak'' classifiers via a parameter $\beta$ for robustness analysis, and (v) evaluating multiple conformal calibration strategies and a baseline C2ST $p$-value.

\paragraph{Data Sampling and Corruption.}
Let $(\theta, y)$ denote (image, class label) pairs from the empirical dataset $D$ (CIFAR-10). In the conditional setting, $y$ is a class label and $\theta$ the corresponding image; in the unconditional setting only $\theta$ is used. We define:
\[
(\theta, y) \sim p, \qquad
( \tilde \theta_\gamma, y) \sim q_{\gamma},
\]
where the corrupted image $\tilde \theta_\gamma$ is obtain as $\tilde \theta_\gamma = \mathcal{C}_\gamma(\theta)$ where $\theta \sim p(\cdot \mid y)$ and $\mathcal{C}_\gamma$ is a corruption operator with strength $\gamma \ge 0$. The following corruption families are considered:
\begin{enumerate}
    \item \textbf{Gaussian Blur} (``blur''): per-channel convolution with a Gaussian kernel of standard deviation $\sigma = \gamma$ (implemented via \texttt{scipy.ndimage.gaussian\_filter}).
    \item \textbf{Swirl Transformation} (``swirl''): a geometric warp (\texttt{skimage.transform.swirl}) with angular distortion parameter (strength) set to $\gamma$ and radius proportional to the image spatial extent.
    \item \textbf{Additive Gaussian Noise} (``noise''): $\tilde \theta_\gamma = \theta + \epsilon$, where $\epsilon \sim \mathcal{N}(0,\gamma^2 I)$, followed by clipping to $[-3,3]$ in normalized pixel space.
\end{enumerate}
When $\gamma=0$ the corruption reduces to the identity map. For each experiment we sample $N_{\text{true}}=1024$ uncorrupted and $N_{\text{fake}}=1024$ corrupted images to form the training pool. Additional independently sampled batches are used for evaluation replicates.

\paragraph{Preprocessing.}
CIFAR-10 images are normalized channelwise to zero mean and unit (scaled) variance via the standard transform with mean $(0.5,0.5,0.5)$ and std $(0.5,0.5,0.5)$. All downstream embedding extraction resizes inputs to $299\times 299$ (bilinear) and converts grayscale to 3-channels when necessary.

\paragraph{Network Architecture.}
We employ a frozen Inception~V3 backbone (pretrained on ImageNet) as a feature extractor producing a $2048$-dimensional embedding $f(y) \in \mathbb{R}^{2048}$. In the conditional setting, a learnable label embedding $e(\theta)\in\mathbb{R}^{64}$ is concatenated to yield $[f(y); e(\theta)] \in \mathbb{R}^{2112}$. A feed-forward discriminator $g$ consists of:
\[
\text{Linear}(d_{\text{in}}, H) \rightarrow \text{ReLU} \rightarrow \text{Dropout}(0.5) \rightarrow
\text{Linear}(H, H/2) \rightarrow \text{ReLU} \rightarrow \text{Dropout}(0.5) \rightarrow
\text{Linear}(H/2, 1),
\]
with $H=256$ (so hidden layers of sizes $256$ and $128$). The model outputs a logit $\ell = g(\cdot)$, trained with binary cross-entropy against labels $1$ (true) and $0$ (fake). Only the classifier head and (if used) label embeddings are trainable; the Inception backbone remains frozen.

\paragraph{Weak Classifier Interpolation.}
To probe robustness and create a spectrum of discriminator strengths, we store (i) the randomly initialized classifier parameters $\psi_{\text{rand}}$ and (ii) the fully trained parameters $\psi_{\text{trained}}$. For any $\beta \in [0,1]$, we define an interpolated (``weak'') classifier:
\[
\psi_{\beta} \;=\; (1-\beta)\,\psi_{\text{trained}} \;+\; \beta\,\psi_{\text{rand}}.
\]
Thus $\beta=0$ recovers the trained discriminator and $\beta\to 1$ approaches a near-random classifier. We evaluate each $\beta$ independently in the downstream statistical tests.

\paragraph{Training Procedure.}
Training is conducted for $200$ epochs with Adam (learning rate $10^{-4}$) and cosine annealing LR scheduling. We employ Distributed Data Parallel (DDP) across all available GPUs. Instead of relying on a standard \texttt{DataLoader} with samplers, we materialize the full training set in CPU memory, deterministically shuffle each epoch (synchronized seeds across ranks), and partition indices among GPUs. Batch size per GPU is $256$. This design avoids pinned-memory bottlenecks and enables explicit control of memory usage (with periodic cache clearing). 

\paragraph{Evaluation and Test Statistics.}
Let the (potentially interpolated) classifier yield logit $\ell(y)$ (conditional case notationally suppressed). We estimate:
\begin{itemize}
    \item \textbf{Baseline C2ST $p$-value}: using the held-out evaluation samples, comparing score distributions between true and fake.
    \item \textbf{Conformal Multiple}: a conformal calibration procedure applied to embeddings (concatenated with label embeddings if conditional), producing $p_{\text{conf, mult}}$.
    \item \textbf{Conformal Uniform Tests}: scalability check by enlarging the reference (true) sample size by multiplicative factors $m \in \mathcal{M}$ (e.g., $\{2,5,20\}$), yielding $p_{\text{conf, uni}}^{(m)}$.
\end{itemize}
For each $(\gamma,\beta)$ configuration we perform $n_{\text{eval}}$ independent replicates (distinct random seeds), each resampling true/fake evaluation sets (default $64$ runs unless otherwise specified). Success rates are reported as the empirical frequency of $p < \alpha$ with $\alpha = 0.05$ for each test variant.

\paragraph{Hyperparameter Sweeps.}
We explore $\gamma$ ranges tailored to the perceptual sensitivity of each corruption type. Table~\ref{tab:gamma_ranges} summarizes the grids and the shared $\beta$ values.

\begin{table}[h]
\centering
\caption{Corruption strength grids $\Gamma$ per task and interpolation coefficients $\mathcal{B}$.}
\label{tab:gamma_ranges}
\begin{tabular}{ll}
\toprule
Task & $\gamma$ (tested values) \\
\midrule
Blur   & $\{0.00, 0.36, 0.37, 0.38, 0.39\}$ \\
Swirl  & $\{0.00, 0.25, 0.30, 0.35, 0.40, 0.50\}$ \\
Noise  & $\{0.0,0.001,0.002,0.003,0.004,0.005,0.006,0.007,0.008\}$ \\
\bottomrule
\end{tabular}

\vspace{4pt}
$\mathcal{B} = \{0.0, 0.95\}$ (strong vs.\ weak discriminator regimes).
\end{table}

The narrow interval for blur emphasizes the phase transition region of detectability, while noise employs a single small variance ensuring subtle corruption. Swirl spans a broader geometric distortion spectrum.

\subsection{Evaluation Details}
\label{app:eval}

For evaluation, using the trained classifier, we sample $1000 \cdot m$ data points $\{y_i, \theta_i\}$ from $p(\theta \mid y)p(y)$, and another 1,000 samples from $q(\theta \mid y)p(y)$ to compute the rejection rate for C2ST and its conformal variants.

For classical C2ST and the Conformal Multiple testing variant, we set $m = 1$. For the Conformal Uniform test, we vary $m \in \{1, 2, 5, 10, 20, 50, 200\}$. We denote this setting as Conformal Uniform($m$), where $m$ specifies the evaluation sample budget.

For SBC and TARP, which require multiple posterior samples per observation $y$, we draw 200 posterior samples $\theta$ for each $y$. The overall evaluation budget remains consistent, using 1,000 pairs $(y, \theta)$ from $p(\theta \mid y)p(y)$. We denote these methods as SBC(200) and TARP(200) to indicate the number of posterior samples used. For DC, we also fix $m=10$ and bootstrap 100 times to make sure it has sufficiently good power. 


\section{Experiments on high dimensional posteriors with manifold structure}

We further evaluate robustness and calibration of the proposed conformal tests in settings where the true posterior lies on a low-dimensional manifold embedded in a high-dimensional space. Specifically, we construct synthetic posteriors using a conditional normalizing flow model trained on data generated from a nonlinear spherical manifold, enabling controlled perturbations and precise comparisons across methods.

\paragraph{Problem setup.} We first sample \(n\) points uniformly on the surface of the unit sphere in \(\mathbb{R}^3\) using spherical coordinates \(\phi \sim \mathcal{U}[0, \pi]\), \(\tau \sim \mathcal{U}[0, 2\pi]\), and convert them to Cartesian coordinates:
\[
a = \sin(\phi)\cos(\tau), \quad
b = \sin(\phi)\sin(\tau), \quad
c = \cos(\phi).
\]
These coordinates are mapped into a higher-dimensional ambient space \(\mathbb{R}^d\) (with $d=100$) via a random projection matrix \(W \in \mathbb{R}^{3 \times d}\) sampled from a standard Gaussian, yielding
\[
\theta = W^\top(a, b, c)^\top + \varepsilon, \quad \varepsilon \sim \mathcal{N}(0, \sigma^2 I),
\]
where \(\theta \in \mathbb{R}^d\) and \(\sigma > 0\) controls the noise level. The corresponding conditioning variable is defined as \(y = (\phi/\pi, \tau / 2\pi)\), so that \(y \in [0,1]^2\).

\paragraph{Training details.} We simulate posterior distributions using a conditional normalizing flow (CNF) model based on RealNVP~\cite{dinh2016density}. Our architecture consists of 8 RealNVP layers, each parameterized by a neural network with one hidden layer of width 512. The model is trained for 500 epochs to ensure convergence. The true posterior is obtained by applying the inverse flow to a base sample \(z \sim \mathcal{N}(\gamma, I)\), where \(\gamma = 0\).

To simulate an approximate posterior, we perturb the base distribution by introducing a mean shift \(\gamma \in \mathbb{R}^d\), such that \(z \sim \mathcal{N}(\gamma, I)\) instead of the true \(z \sim \mathcal{N}(0, I)\). The resulting approximate posterior \(q(\theta \mid y)\) is thus generated by applying the same flow model to this mis-specified base distribution.

Figure~\ref{fig:nf-post-gamma} illustrates samples from the true and approximate posteriors for various values of \(\gamma\), projected onto the first two principal components using Principal Component Analysis (PCA).

\begin{figure}[ht]
    \centering
    \includegraphics[width=\linewidth]{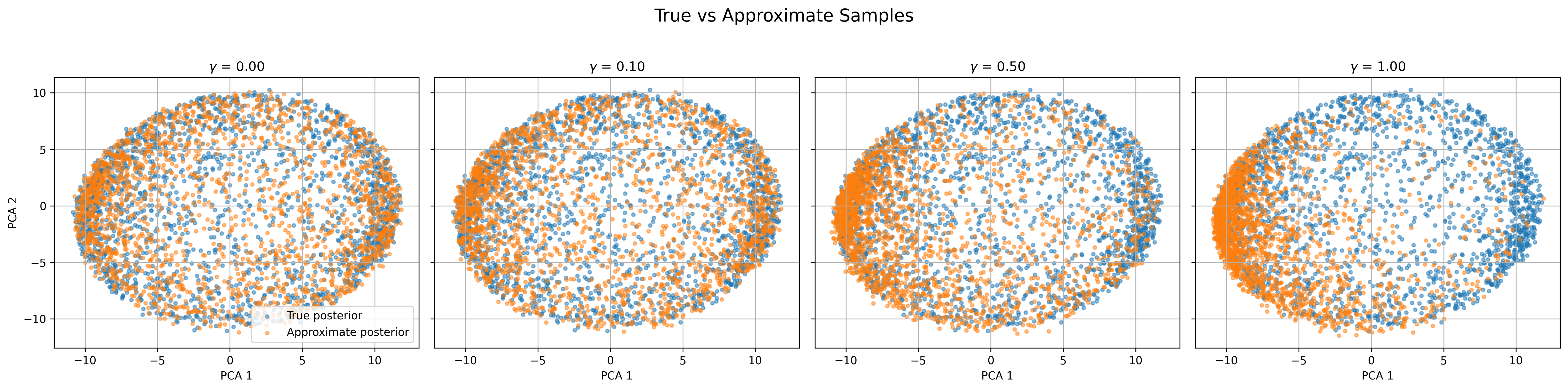}
    \caption{Approximate posterior samples (as a function of the base mean perturbation $\gamma$) projected on the first two principal axes}
    \label{fig:nf-post-gamma}
\end{figure}
To distinguish between the true and approximate posteriors, we compute log density ratios by subtracting the log-probabilities assigned by the two CNF models. To emulate degradation in the classifier’s discriminative ability, we add Gaussian noise \(\delta \sim \mathcal{N}(0, \beta^2)\) to the log-density ratios. When \(\beta = 0\), the scores are exact; as \(\beta\) increases, the scores become progressively noisier, reflecting reduced discriminative power. Figure~\ref{fig:nf_example} shows that conformal variants of C2ST outperform all baselines under increasing posterior perturbation and retain a better test power under classifier degradation. Note that even though DC achieves good power in some settings, it failed to control the Type-I error rate, which is always the first priority when we conduct hypothesis testing.

\begin{figure}[h!]
    \centering
    \begin{subfigure}[t]{0.45\textwidth}
        \centering
        \includegraphics[width=\linewidth]{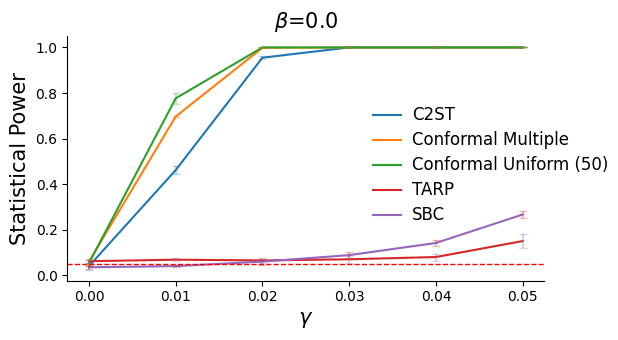}
        \caption{Statistical power as a function of base mean perturbation $\gamma$}
        \label{fig:nf_spherical_gamma}
    \end{subfigure}
    \hfill
    \begin{subfigure}[t]{0.45\textwidth}
        \centering
        \includegraphics[width=\linewidth]{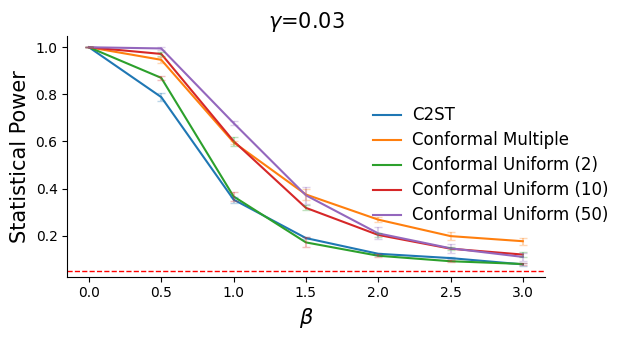}
        \caption{Statistical power as a function of classifier noise $\beta$}
        \label{fig:nf_spherical_beta}
    \end{subfigure}
    \hfill

    \caption{Power analysis under mean perturbation of the base distribution of the normalizing flow model.}
    \label{fig:nf_example}
\end{figure}

\section{A Practitioner's Guide}
\label{app:practitioners-guide}

We collect here practical recommendations for applying the Conformal C2ST,
distilling the theoretical and empirical findings of the main text into concrete
guidance.

\paragraph{Which test to use.} The two variants trade simulation budget against
power:
\begin{itemize}
  \item \textbf{Drop-in replacement for C2ST $\rightarrow$ Conformal Multiple.}
    It uses a single shared calibration set from $p$, exactly matching the
    standard C2ST's simulation budget ($N$ samples each from $p$ and $q$) and
    inference cost. At that same budget it is already substantially more powerful
    and far more robust to weak classifiers than C2ST.
  \item \textbf{Maximum sensitivity when $p$-draws are cheap $\rightarrow$
    Conformal Uniform$(m)$.} In NPE, sampling from the true joint
    $p(\theta,y)=\pi(\theta)\,p(y\mid\theta)$ typically does \emph{not} require
    forward simulation through the learned model, so additional calibration from
    $p$ is inexpensive. Uniform$(m)$ draws a fresh calibration set per test point
    ($N\times m$ from $p$, still only $N$ from $q$), giving exact finite-sample
    validity (Lemma~\ref{lem:uniformity}) and the highest power and resolution,
    especially for subtle discrepancies (small $\gamma$).
  \item \textbf{Expensive $p$-draws (e.g.\ high-fidelity simulators)
    $\rightarrow$ Multiple, or Uniform with small $m$.}
\end{itemize}

\paragraph{Choosing the calibration size $m$ (Uniform).} Power increases
monotonically with $m$: the conformal $p$-value variance scales as $O(1/m)$
(Lemma~\ref{lem:var-conformal-p-val}), so $p$-values concentrate on their non-uniform limit
under the alternative more quickly. In our benchmarks most of this gain is
realized by $m\approx 10$--$50$, with $m=50$ a sensible default and little
additional benefit beyond it (see the ablation in
Appendix~\ref{app:additional}). One caveat: under \emph{severe} classifier
degradation, fresh-calibration Uniform can fall below Multiple (e.g.\
Table~\ref{tab:lensing}, $\beta \ge 0.9$), since once scores are near-random the
shared-calibration U-statistic aggregates the residual ranking signal more
stably. Multiple is therefore a reasonable hedge in the very-weak-classifier
regime.

\paragraph{Choosing and training the classifier.} The headline practical message
is \emph{not} ``use any classifier.'' Rather: validity is automatic for any score
(Lemma~\ref{lem:uniformity}), but power is governed by \emph{ranking quality}
(AUC), not by thresholded accuracy or probability calibration. Concretely:
\begin{itemize}
  \item Prefer higher-AUC scores or features; better ranking directly improves
    power. This is still far weaker than the standard C2ST requirement of a
    well-calibrated or near-Bayes-optimal classifier.
  \item Use the \emph{continuous} score (classifier log-odds or signed margin),
    never the hard $0/1$ decision; any strictly increasing transform of the score
    gives an identical test, since the method depends only on ranks.
  \item Keep ties rare (Assumption~\ref{ass:growth}): avoid heavily discretized,
    quantized, clipped, or saturated outputs, which create tie mass and erode the
    ranking signal.
  \item Weak, overfit, or off-the-shelf classifiers are fine provided they
    preserve some useful ordering. A frozen embedding with a light trainable head
    (as in our image experiments) is a practical default.
\end{itemize}

\paragraph{Interpreting the output, and when the method cannot help.} Because
Type-I control never depends on the classifier, a weak classifier yields a
\emph{conservative but valid} test, never an inflated one---so a model that
``passes'' is more trustworthy here than under C2ST, where a pass can simply
reflect low power. Two limits are worth keeping in mind. First, the test is
\emph{global}: it certifies joint validity of $q$ but does not localize where $q$
deviates from $p$; pair it with a local diagnostic if localization is needed.
Second, there is a hard floor: if the score carries no ordering information
($\mathrm{AUC}\approx\tfrac12$---an almost-constant score, or $p$ and $q$ nearly
identical or flat over the relevant region), then no ranking-based test, ours
included, can recover power, and the test simply sits at the nominal level.

\begin{figure}[ht]
    \centering
    \begin{subfigure}[b]{\linewidth}
        \includegraphics[width=\linewidth]{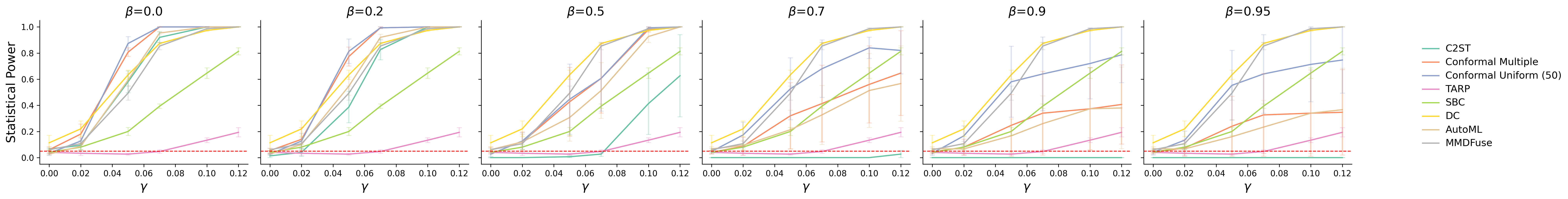}
        \caption{Statistical power as a function of $\gamma$}
    \end{subfigure}
    \begin{subfigure}[b]{\linewidth}
        \includegraphics[width=\linewidth]{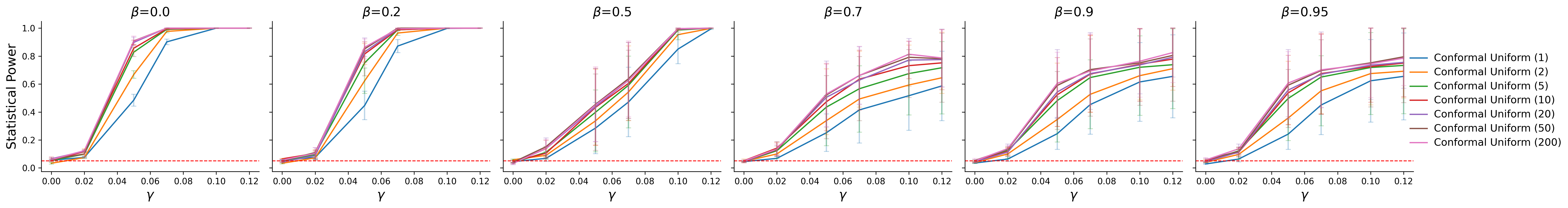}
        \caption{Statistical power under Conformal Uniform with varying $m$}
    \end{subfigure}
    \begin{subfigure}[b]{\linewidth}
        \includegraphics[width=\linewidth]{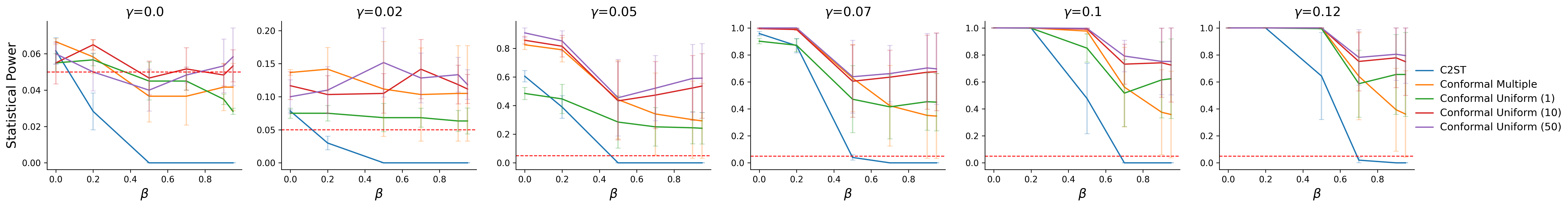}
        \caption{Statistical Power degradation under classifier degeneration $\beta$}
    \end{subfigure}

    \caption{Power analysis under Mean Shift}
    \label{fig:mean_shift}
\end{figure}

\begin{figure}[ht]
    \centering
    \begin{subfigure}[b]{\linewidth}
        \includegraphics[width=\linewidth]{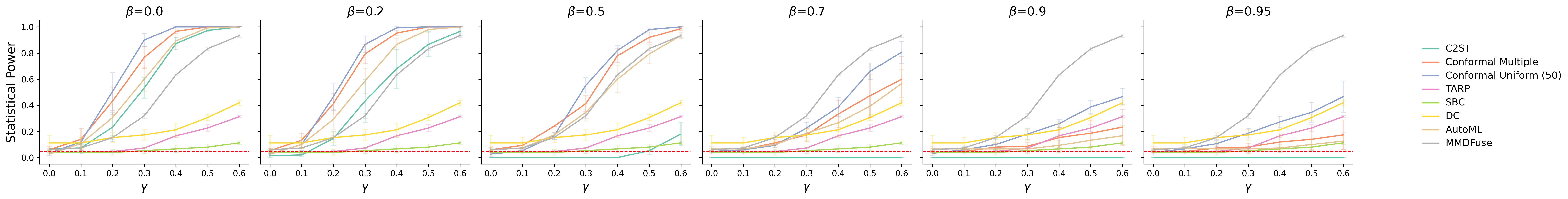}
        \caption{Statistical power as a function of $\gamma$}
    \end{subfigure}
    \begin{subfigure}[b]{\linewidth}
        \includegraphics[width=\linewidth]{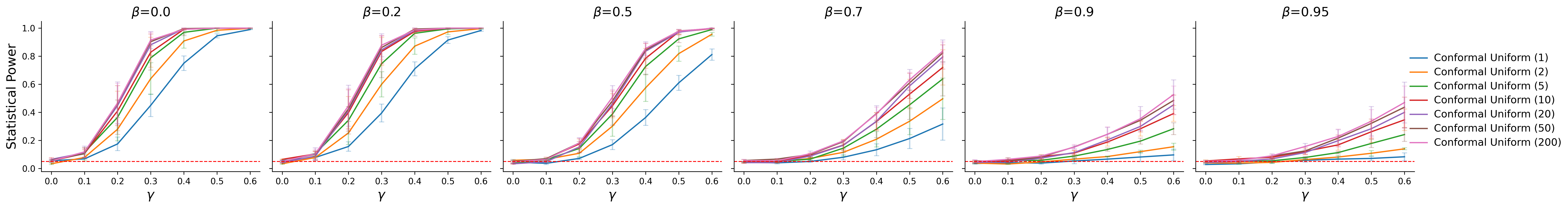}
        \caption{Statistical power under Conformal Uniform with varying $m$}
    \end{subfigure}
    \begin{subfigure}[b]{\linewidth}
        \includegraphics[width=\linewidth]{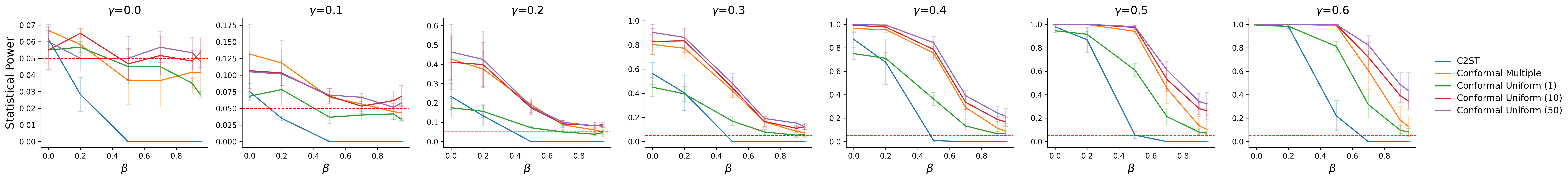}
        \caption{Statistical Power degradation under classifier degeneration}
    \end{subfigure}

    \caption{Power analysis under Covariance Scaling.}
    \label{fig:cov_scaling}
\end{figure}

\begin{figure}[ht]
    \centering
    \begin{subfigure}[b]{\linewidth}
        \includegraphics[width=\linewidth]{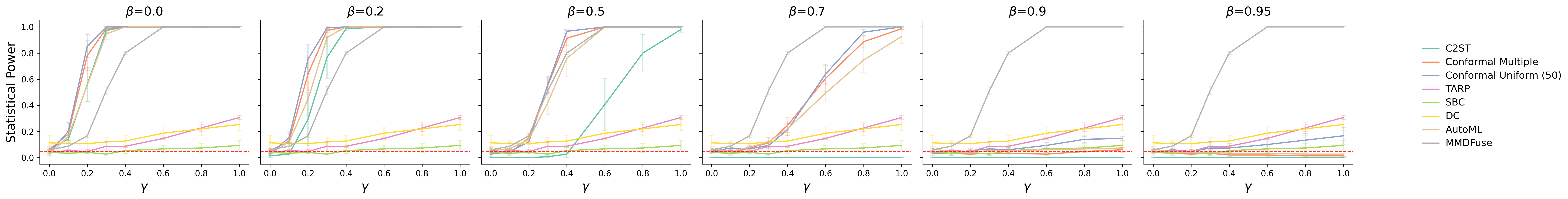}
        \caption{Statistical power as a function of $\gamma$}
    \end{subfigure}
    \begin{subfigure}[b]{\linewidth}
        \includegraphics[width=\linewidth]{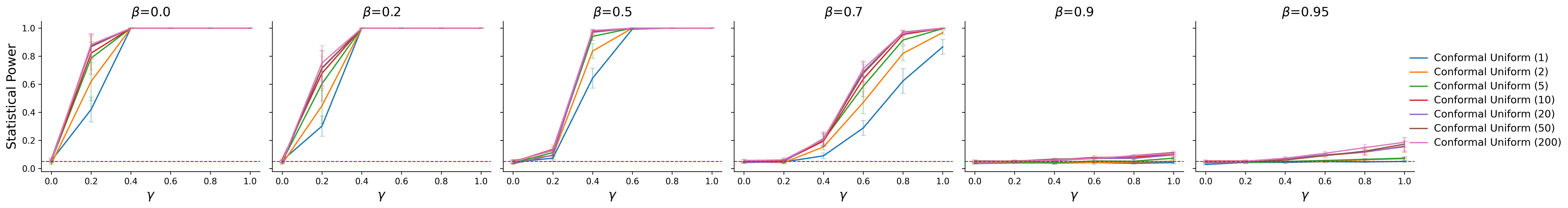}
        \caption{Statistical power under Conformal Uniform with varying $m$.}
    \end{subfigure}
    \begin{subfigure}[b]{\linewidth}
        \includegraphics[width=\linewidth]{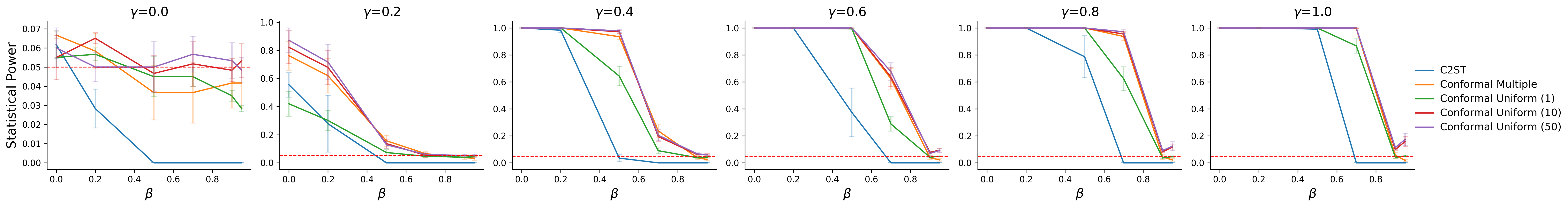}
        \caption{Statistical Power degradation under classifier degeneration}
    \end{subfigure}

    \caption{Power analysis under Anisotropic Covariance Perturbation.}
    \label{fig:anisotropic}
\end{figure}

\begin{figure}[ht]
    \centering
    \begin{subfigure}[b]{\linewidth}
        \includegraphics[width=\linewidth]{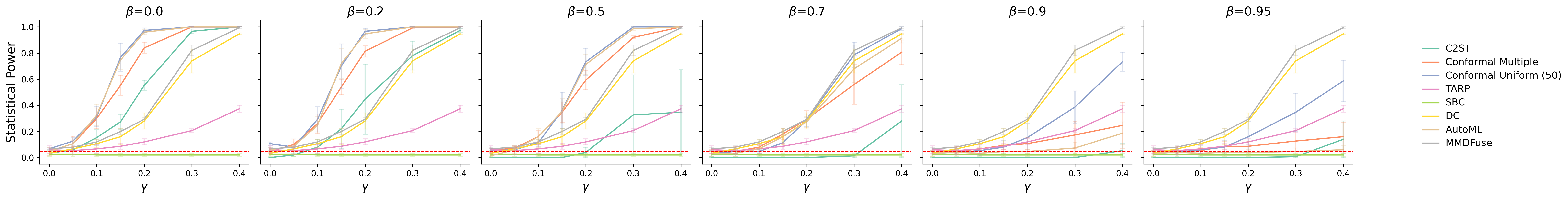}
        \caption{Statistical power as a function of $\gamma$}
    \end{subfigure}
    \begin{subfigure}[b]{\linewidth}
        \includegraphics[width=\linewidth]{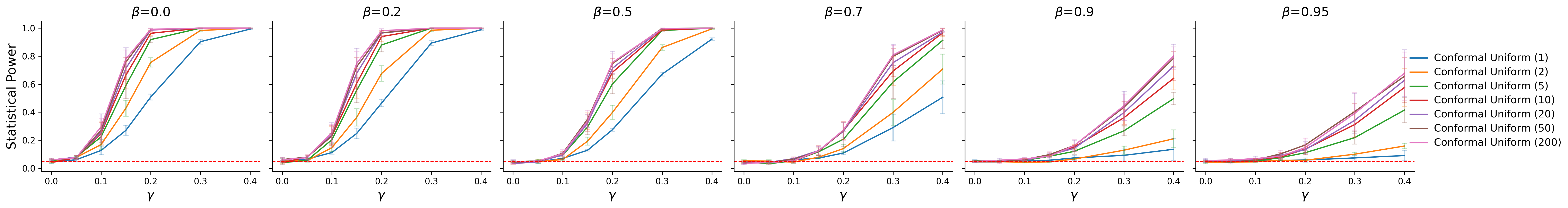}
        \caption{Statistical power under Conformal Uniform with varying $m$}
    \end{subfigure}
    \begin{subfigure}[b]{\linewidth}
        \includegraphics[width=\linewidth]{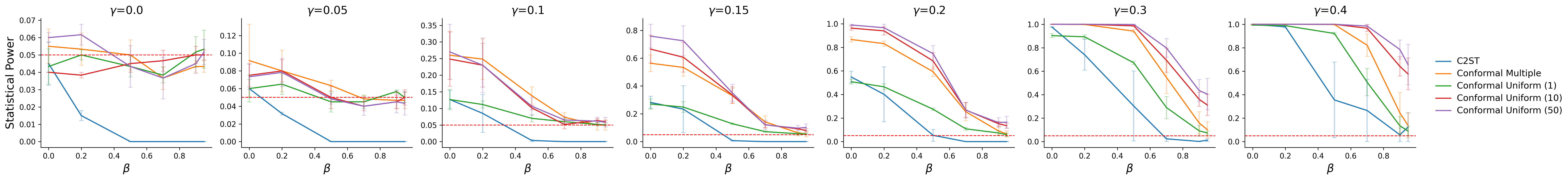}
        \caption{Statistical Power degradation under classifier degeneration}
    \end{subfigure}

    \caption{Power analysis under Heavy-Tailed Perturbation.}
    \label{fig:heavy-tail}
\end{figure}

\begin{figure}[ht]
    \centering
    \begin{subfigure}[b]{\linewidth}
        \includegraphics[width=\linewidth]{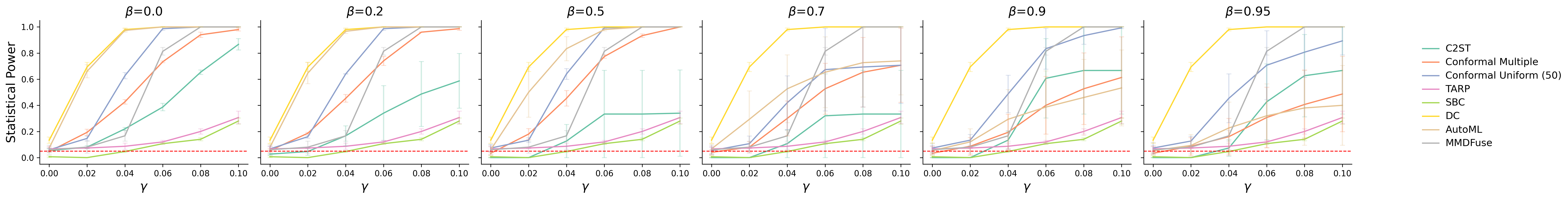}
        \caption{Statistical power as a function of $\gamma$}
    \end{subfigure}
    \begin{subfigure}[b]{\linewidth}
        \includegraphics[width=\linewidth]{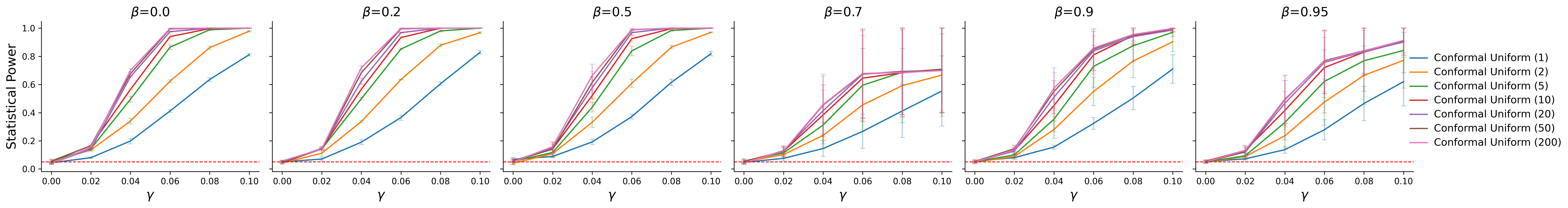}
        \caption{Statistical power under Conformal Uniform with varying $m$}
    \end{subfigure}
    \begin{subfigure}[b]{\linewidth}
        \includegraphics[width=\linewidth]{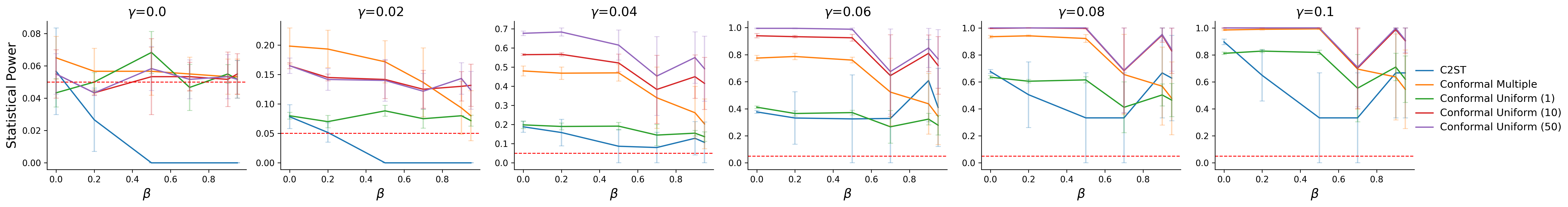}
        \caption{Statistical Power degradation under classifier degeneration}
    \end{subfigure}

    \caption{Power analysis under Additional Mode.}
    \label{fig:additional-mode}
\end{figure}

\begin{figure}[ht]
    \centering
    \begin{subfigure}[b]{\linewidth}
        \includegraphics[width=\linewidth]{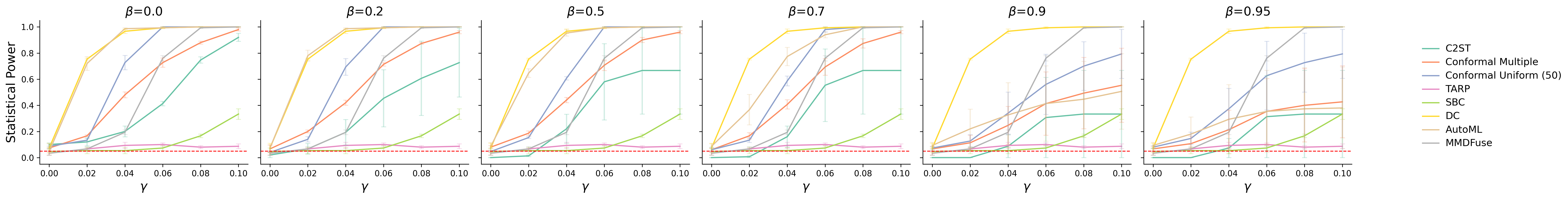}
        \caption{Statistical power as a function of $\gamma$}
    \end{subfigure}
    \begin{subfigure}[b]{\linewidth}
        \includegraphics[width=\linewidth]{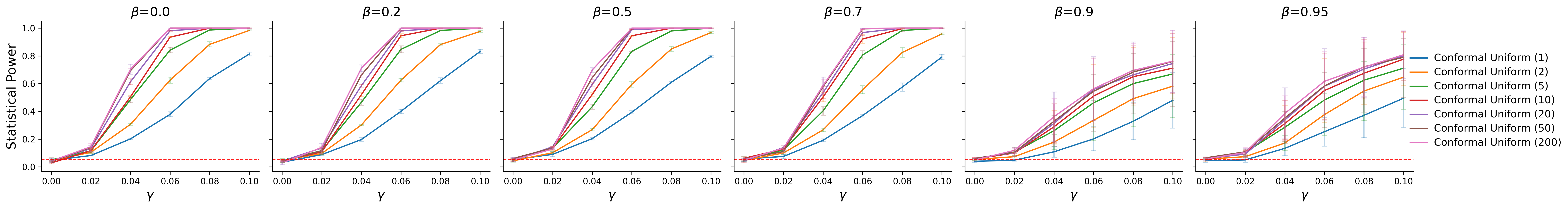}
        \caption{Statistical power under Conformal Uniform with varying $m$}
    \end{subfigure}
    \begin{subfigure}[b]{\linewidth}
        \includegraphics[width=\linewidth]{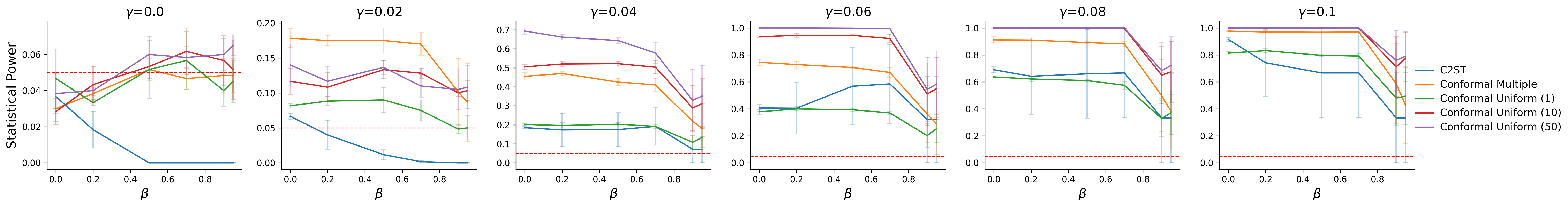}
        \caption{Statistical Power degradation under classifier degeneration}
    \end{subfigure}

    \caption{Power analysis under Mode Collapse.}
    \label{fig:mode-collapse}
\end{figure}


\end{document}